%% file: iclr2027_conference.tex
\documentclass{article} 
\usepackage{iclr2027_conference,times}

\input{math_commands.tex}

\usepackage{hyperref}
\usepackage{url}

\usepackage{amsmath}
\usepackage{amssymb}
\usepackage{mathtools}
\usepackage{amsthm}
\usepackage{algorithm, algorithmic}
\usepackage{comment}
\usepackage{enumitem}

\theoremstyle{plain}
\newtheorem{theorem}{Theorem}
\newtheorem{lemma}{Lemma}
\theoremstyle{remark}

\title{DGA-Muon: Decoupled Geometry-Aligned Adaptive Scaling for Muon}

\author{Wenpeng Zhang* \\
Independent Reasearcher \\
\texttt{zhangwenpeng0@gmail.com} \\
\And
Runsheng Yu* \\
Independent Reasearcher \\
\texttt{runshengyu@gmail.com}
}

\iclrfinalcopy 
\begin{document}

\maketitle
\lhead{Preprint}

\begin{abstract}
While NorMuon has achieved strong empirical performance in pretraining, its underlying adaptive mechanism remains largely heuristic and poorly understood. In this work, we provide the first systematic theoretical analysis of NorMuon's adaptivity, revealing that it primarily arises from orthogonalization-induced geometry rather than genuine optimization dynamics, serving to offset the resulting geometric non-uniformity. Under exact orthogonalization, the adaptive scaling factors degenerate into a single global scalar for square and wide matrices, while for tall matrices their variation results from unevenly distributed row energy after orthogonalization. Under approximate orthogonalization, the orthogonality residuals introduce additional variation into the scaling, giving rise to a counterintuitive Orthogonalization--Adaptivity Paradox: more accurate orthogonalization weakens adaptivity. We further show that NorMuon's rigid row-wise scaling is geometrically misaligned with the one-sided orthogonal structure of tall matrices by distorting column orthogonality. Motivated by these limitations, we propose two core design principles that a desirable adaptive mechanism for Muon should satisfy. First, adaptive scaling should be decoupled from orthogonalization, with the scaling factors computed directly from raw gradients. Second, adaptive scaling should be aligned with the shape-dependent orthogonal structure of the polar factor, using row-wise scaling for wide matrices and column-wise scaling for tall matrices. We prove that this geometry-aligned scaling preserves the orthogonal structure of the update. By incorporating several other techniques, we obtain Decoupled Geometry-Aligned Muon (DGA-Muon). We establish convergence guarantees for DGA-Muon and empirically validate both our theoretical characterization of NorMuon's scaling degeneration and the superiority of DGA-Muon.
\end{abstract}

\section{introduction}

The Muon (MomentUm Orthogonalized by Newton--Schulz) optimizer~\citep{jordan2024muon}, specifically designed for optimizing two-dimensional parameters in neural networks, first gained traction through its remarkable empirical success in the Modded-NanoGPT speedrun~\citep{modded_nanogpt_2024}, where it demonstrated substantially faster convergence than AdamW~\citep{loshchilov2019decoupled}. Unlike AdamW, which treats weight matrices as flattened one-dimensional vectors, Muon preserves their inherent two-dimensional structure and applies an orthogonalization operation to equalize their singular values. Following this initial success, subsequent studies~\citep{liu2025muon,shah2025practical} further validated Muon's advantages over AdamW in the pretraining of multi-billion-parameter language models. To date, Muon has been widely adopted for training frontier foundation models, including open-weight models such as Kimi~\citep{team2025kimi,team2026kimi}, GLM~\citep{zeng2025glm,zeng2026glm}, and DeepSeek~\citep{xu2026deepseek}.

The remarkable success of Muon has sparked a surge of research interest. One line of research has sought to understand the mechanisms underlying Muon's superior performance~\citep{nguyen2026spectral,wang2026muon,wang2026muoniclr} and to establish theoretical guarantees for its convergence~\citep{li2025note,kim2026convergence}. An even larger body of research is devoted to further enhancing the practical performance of Muon. One prominent branch of these efforts focuses on developing new methods that improve the accuracy or efficiency of the orthogonalization procedure~\citep{amsel2026polarexpress,GramNewtonSchulz,liu2026muon}. Another promising direction explores adaptive preconditioning for Muon to better account for the geometry of the loss landscape. Conceptually, this mirrors how adaptive optimizers such as AdaGrad~\citep{duchi2011adaptive} and Adam~\citep{kingma2015adam} improved upon vanilla SGD. More fundamentally, a theoretical study~\citep{su2025isotropic} suggests that the perfectly uniform spectral flattening induced by Muon may not be optimal, providing a principled foundation for incorporating adaptive preconditioning into Muon. While several initial attempts have been made, most of these methods either incur prohibitive computational or memory costs (e.g., requiring expensive matrix decompositions or maintaining full-matrix preconditioners)~\citep{xu2026fismo, zhang2026mousse, song2026decoupling} or degrade to a simplistic single global scaling factor~\citep{zhang2025adagrad}. Striking an effective balance between computational efficiency and adaptive expressiveness, NorMuon~\citep{li2026normuon} has emerged as the most widely adopted adaptive Muon variant. By employing a lightweight yet fine-grained neuron-wise (i.e., row-wise) scaling strategy, NorMuon consistently outperforms vanilla Muon across various settings. Notably, it achieves state-of-the-art performance on two leading speedruns: Modded-NanoGPT and NanoChat~\citep{nanochat}.

A distinctive characteristic of NorMuon is that its adaptive scaling factors are computed from the orthogonalized momentum, rather than directly from historical gradients as in conventional adaptive optimizers such as AdaGrad~\citep{duchi2011adaptive}, Adam~\citep{kingma2015adam}, and Shampoo~\citep{gupta2018shampoo}. This design tightly couples adaptivity with the orthogonalization procedure, constituting a fundamentally different paradigm for adaptive optimization. However, this design choice is largely heuristic, and the rationale for using orthogonalized momentum as the source of adaptivity lacks theoretical justification. Thus far, neither theoretical convergence guarantees for NorMuon nor a principled characterization of its adaptive behavior have been established, leaving this unconventional adaptive mechanism poorly understood.

In this work, we provide the first systematic theoretical analysis of NorMuon's adaptivity. To understand its nature, we analyze the variation of NorMuon's adaptive scaling factors, asking what information is encoded in their differences and what drives this variation. Surprisingly, we find that NorMuon's adaptivity does not originate from conventional optimization dynamics, but instead arises primarily from the geometry induced by orthogonalization. To see this clearly, recall that NorMuon's scaling factors are derived from the squared \(\ell_2\)-norms of the rows of the orthogonalized matrix. Although orthogonalization equalizes singular values, it does not necessarily imply uniform row-wise norms. In fact, under exact orthogonalization, this row-wise non-uniformity constitutes the primary source of variation in the scaling factors. However, such variation arises exclusively for tall matrices, where orthogonalization enforces only one-sided orthogonality, namely column orthonormality, but not row orthonormality. Consequently, while the total squared $\ell_2$-norm across all rows is fixed, the squared $\ell_2$-norm of each individual row remains unconstrained, allowing the row-wise scaling factors to vary (see Subsection~\ref{derive} for a full derivation). In contrast, square and wide matrices satisfy row orthonormality, which ensures that the squared \(\ell_2\)-norm of every row is exactly one. Consequently, the row-wise adaptive scaling factors become identical across rows, and the adaptive scaling completely degenerates into a single global factor, thereby nullifying the intended row-wise adaptivity (see Subsection~\ref{exact} for detailed analysis). In practical implementations, orthogonalization is typically approximate. The resulting orthogonality residuals introduce an additional source of variation into the scaling factors. Notably, this gives rise to a rather counterintuitive and undesirable phenomenon, which we term the Orthogonalization--Adaptivity Paradox: the more accurate the orthogonalization, the weaker the resulting adaptivity, which vanishes entirely under exact orthogonalization for square and wide matrices; conversely, less accurate orthogonalization permits greater variation in the scaling factors and thus stronger adaptivity (see Subsections~\ref{approximate} and \ref{derive} for details). Together, these observations reveal a fundamental limitation of NorMuon: its adaptive scaling primarily acts as a compensatory mechanism that offsets geometric non-uniformity induced by orthogonalization, rather than capturing genuine optimization-relevant information.

The preceding analysis of NorMuon's inherent limitation suggests that a desirable adaptive mechanism for Muon should preserve NorMuon's computational efficiency while deriving adaptivity from optimization-relevant signals rather than orthogonalization-induced geometry. At its core, NorMuon's limitation arises from computing scaling factors from the orthogonalized momentum, which inevitably causes the resulting adaptivity to inherit geometric distortions induced by orthogonalization. To break this coupling, we introduce a decoupled design principle: computing adaptive scaling factors directly from raw gradients, independent of orthogonalization, and subsequently applying them to the orthogonalized matrix, i.e., the polar factor~\citep{higham2008functions,higham2020polar} of the momentum. Another design deficiency of NorMuon lies in its reliance on rigid row-wise scaling, which overlooks a fundamental geometric reality: for non-square matrices, the polar factor inherently exhibits only one-sided orthogonality. Specifically, for tall matrices, the polar factor is column-orthogonal, with its columns defining the orthogonal directions. Row-wise scaling, however, is a one-sided operation applied exclusively along the row dimension, which is misaligned with the column-wise orthogonal structure. As a result, this geometric mismatch distorts the column geometry and breaks the original orthogonality induced by the polar factor. To prevent this structural distortion, we introduce our second design principle: geometry-aligned scaling, which aligns the scaling with the one-sided orthogonal structure. Specifically, the scaling axis is chosen according to the shape-dependent orthogonality structure of the polar factor: row-wise scaling is applied for wide matrices, whereas column-wise scaling is applied for tall matrices (or either for square matrices), thereby preserving the orthogonal structure obtained through orthogonalization. We further provide a formal theoretical
justification for the orthogonality-preserving property of the proposed adaptive scaling mechanism (see Subsection~\ref{formal} for the proof).

Beyond the two core design principles, we incorporate several additional design considerations to better regulate the dynamics of the adaptive scaling factors. First, we construct the second-moment estimate for each adaptive unit (i.e., row or column) from the sum, rather than the mean, of the squared gradient entries, thereby making the scaling of each unit depend on its total accumulated gradient energy. This differs from NorMuon, which normalizes the total row-wise gradient energy by the row dimension, making its adaptive scaling depend on average per-coordinate energy. Second, we apply standard bias correction to the second-moment estimates to mitigate the initialization bias of the exponential moving average (EMA) during the early stages of optimization. Third, to curb the excessive anisotropy that adaptive scaling can introduce across different orthogonal directions, we employ an adaptive clipping mechanism that bounds the dynamic range of the scaling factors. This preserves Muon's favorable ability to support large step sizes while maintaining training stability. Combining all these ingredients, we propose Decoupled Geometry-Aligned Muon (DGA-Muon; see Algorithm~\ref{alg:A-Muon}), which equips Muon with geometry-aligned adaptive scaling based on genuine gradient signals while achieving computational efficiency comparable to that of NorMuon.

We establish convergence guarantees for DGA-Muon under standard assumptions in the non-convex setting. We empirically verify our theoretical characterization of NorMuon, showing that removing its adaptive scaling on the square Q/K/V/O projection matrices in NanoChat leaves the validation performance essentially unchanged. We further demonstrate that DGA-Muon substantially outperforms NorMuon in an end-to-end NanoChat evaluation, validating the effectiveness of our proposed adaptive scaling mechanism. Finally, we note that our analysis and algorithmic design provide broader insights into general matrix-aware optimizer design beyond enhancing Muon or NorMuon. Our proposed principles extend naturally to directions other than those used by Muon. More generally, any directions that meaningfully characterize the underlying matrix can serve as candidates for adaptive scaling. Furthermore, our analysis highlights the intrinsic necessity of row-wise/column-wise adaptivity as a natural mechanism for adapting to the typical one-sided directional structure inherent in matrices, which manifests along either the row or column dimension.

\section{Preliminaries: Muon and NorMuon}

In this section, we briefly review Muon and NorMuon, providing the basis for our subsequent theoretical analysis.

\paragraph{Muon.}  Muon is an optimizer tailored for two-dimensional matrix parameters. It maintains a momentum buffer $M_t$, updated according to
$M_t = \beta_1 M_{t-1} + (1 - \beta_1) G_t$,
where $G_t$ denotes the gradient at iteration $t$, $\beta_1 \in [0,1)$ is the momentum coefficient, and $M_0 = \mathbf{0} \in \mathbb{R}^{m \times n}$. However, rather than updating the parameters directly along the raw momentum direction, Muon applies the update along the direction of the orthogonalized momentum. Specifically, given the current weight matrix $W_{t-1} \in \mathbb{R}^{m \times n}$ and the momentum $M_t \in \mathbb{R}^{m \times n}$, the update rule of Muon is given as follows
\begin{equation*}
    W_t = W_{t-1} - \eta\operatorname{Orth}(M_t)
\end{equation*}
where $\eta$ is the learning rate, and $\operatorname{Orth}(M_t)$ denotes the orthogonalization of $M_t$, i.e., $\operatorname{Orth}(M_t) = U V^\top$ with $M_t = U \Sigma V^\top$ being the (thin) singular value decomposition\,(SVD). The orthogonalization procedure discards all singular value information by setting every singular value to one, while preserving the directional structure induced by the singular vectors. The resulting orthogonalized matrix \(UV^\top\) is commonly referred to as the \textit{polar factor}~\citep{higham2008functions,higham2020polar} of \(M_t\). In practice, however, Muon avoids the high cost of computing the full SVD by approximating the orthogonalization using efficient Newton--Schulz iterations~\citep{jordan2024muon, amsel2026polarexpress}.

\paragraph{NorMuon.} NorMuon extends Muon by introducing row-wise adaptive scaling. In addition to the momentum buffer $M_t \in \mathbb{R}^{m \times n}$, it maintains a second-moment estimate $v_t \in \mathbb{R}^m$. Specifically, $v_t$ tracks the row-wise mean square of the orthogonalized momentum. Rather than directly updating the parameters with the orthogonalized momentum, NorMuon first scales it by the inverse square root of $v_t$ before applying the update. Formally, let $O_t=\operatorname{Orth}(M_t)$ and $\operatorname{mean}_{\mathrm{cols}}(\cdot)$ denote the row-wise mean, i.e., the mean across columns for each row. Given the decay rate $\beta_2 \in [0,1)$ and the initialization $v_0=\mathbf{0} \in \mathbb{R}^m$, the second-moment estimate $v_t$ is computed via an exponential moving average (EMA)
\begin{equation*}
    v_t = \beta_2 v_{t-1} + (1 - \beta_2) \operatorname{mean}_{\mathrm{cols}}(O_t \odot O_t),
\end{equation*}
where $\odot$ denotes the Hadamard (element-wise) product. The final parameter update then uses $O_t$ scaled by the inverse square root of $v_t$
\begin{equation*}
    W_t = W_{t-1} - \eta_t \frac{O_t}{\sqrt{v_t} + \delta}.
\end{equation*}
Here, the division is broadcast row-wise, $\eta_t$ is the learning rate, and $\delta > 0$ is a small constant for numerical stability. For clarity, we omit the weight decay and learning rate scaling used in the original NorMuon, as neither affects the subsequent analysis.

\section{On the Origin and Limitations of NorMuon's Adaptivity}
\label{limitation}

Although NorMuon's row-wise adaptive scaling has demonstrated strong empirical performance, its underlying theoretical foundation remains largely unexplored.\footnote{To the best of our knowledge, no theoretical convergence guarantee has yet been established for NorMuon.} In this section, we theoretically analyze the origin of NorMuon's adaptivity and uncover its inherent limitations. Specifically, we show that this adaptivity primarily arises from orthogonalization-induced geometry rather than optimization dynamics, and further demonstrate that such adaptivity can degenerate under certain conditions.

\subsection{Exact Orthogonalization}
\label{exact}
 
We first consider the $m \le n$ regime, where the matrix is square ($m = n$) or wide ($m < n$). To facilitate the subsequent analysis, we express the row-wise mean of the element-wise square \(O_t \odot O_t\) in terms of the Gram matrix of $O_t$ 
\begin{equation*}
    \operatorname{mean}_{\mathrm{cols}}(O_t \odot O_t) = \frac{1}{n} \operatorname{diag}(O_t O_t^\top),
\end{equation*}
where $\operatorname{diag}(\cdot)$ denotes the operator that extracts the diagonal entries of a square matrix into a vector. Recall that the thin SVD of $M_t$ is given by $M_t = U\Sigma V^\top$. For $m \le n$, assuming that $M_t$ has full row rank, we have $U \in \mathbb{R}^{m \times m}$, $\Sigma \in \mathbb{R}^{m \times m}$, and $V \in \mathbb{R}^{n \times m}$. By definition of the SVD, $U$ is orthogonal ($UU^\top = U^\top U = I_m$) and $V$ is column-orthonormal ($V^\top V = I_m$). Then, under exact orthogonalization, the identity $O_t = UV^\top$ holds exactly. Therefore, we obtain
\begin{equation*}
    O_t O_t^\top = (UV^\top)(UV^\top)^\top = U(V^\top V)U^\top = UU^\top = I_m,
\end{equation*}
i.e., $O_t$ is row-orthonormal, and hence
\begin{equation*}
    \operatorname{mean}_{\mathrm{cols}}(O_t \odot O_t) = \frac{1}{n} \operatorname{diag}(O_t O_t^\top) = \frac{1}{n} \mathbf{1}_m,
\end{equation*}
where $\mathbf{1}_m$ denotes the all-ones vector in $\mathbb{R}^m$. 
Substituting this identity into the EMA update rule for $v_t$, unrolling the recursion, evaluating the resulting geometric series, and using $v_0 = \mathbf{0}$, we obtain
\begin{equation*}
    v_t = \beta_2^t v_0 + (1-\beta_2^t)\frac{1}{n}\mathbf{1}_m
    = (1-\beta_2^t)\frac{1}{n}\mathbf{1}_m,
\end{equation*}
which is uniform across rows, with only a global scalar varying over time. Furthermore, if the standard bias correction omitted in NorMuon were applied, it would cancel the time-dependent factor, yielding exactly $ \tfrac{1}{n} \mathbf{1}_m$, which is functionally equivalent to a fixed global scalar throughout training. The resulting scaling vector $\tfrac{1}{\sqrt{v_t}+\delta}$ therefore assigns the same scaling coefficient to each row. Hence, the originally intended neuron-wise adaptive scaling completely collapses to a trivial global scalar, thereby rendering the adaptive mechanism fundamentally ineffective.

\subsection{Approximate Orthogonalization}
\label{approximate}

The preceding analysis assumes exact orthogonalization. In practice, however, NorMuon employs approximate orthogonalization via a finite number of Newton--Schulz iterations~\citep{li2026normuon}. We now analyze the approximate orthogonalization case. Assume that the approximately orthogonalized matrix $O_t$ satisfies
\[
O_tO_t^\top = I_m + E_t,
\]
where $E_t \in \mathbb{R}^{m \times m}$ denotes the orthogonality residual induced by approximate orthogonalization. Then we have
\[
\operatorname{mean}_{\mathrm{cols}}(O_t\odot O_t)
=
\frac1n\operatorname{diag}(O_tO_t^\top)
=
\frac1n\left(\mathbf1_m+\operatorname{diag}(E_t)\right).
\]
Substituting this expression into the EMA update of $v_t$, unrolling the recursion, evaluating the resulting series, and using $v_0=\mathbf0$, yields
\[
v_t = (1-\beta_2^t)\frac1n\mathbf1_m + \frac{1-\beta_2}{n} \sum_{i=1}^{t} \beta_2^{t-i} \operatorname{diag}(E_i).
\]
For convenience, define
\[
R_t = \frac{1-\beta_2}{n} \sum_{i=1}^{t} \beta_2^{t-i} \operatorname{diag}(E_i).
\]
The term $(1-\beta_2^t)\frac1n\mathbf1_m$ coincides exactly with the row-invariant scaling term derived in the exact orthogonalization case, whereas $R_t$ captures the exponentially weighted accumulation of the orthogonality residuals and constitutes the only source of row-wise variation in $v_t$. To further characterize the magnitude of $R_t$, we assume that the orthogonality residual is uniformly bounded in spectral norm by some small constant $\epsilon > 0$, i.e., $\|E_t\|_2 \le \epsilon$ for all $t$. This assumption is consistent with practical Newton--Schulz implementations, where the orthogonalization error is controlled within a prescribed tolerance. The spectral norm bound directly implies a bound on the diagonal entries of \(E_t\). Specifically, for each $j \in \{1, \dots, m\}$, by the Cauchy--Schwarz inequality, the definition of the induced operator norm, and the fact that \(\|\mathbf{e}_j\|_2=1\), we have
\[
|(E_t)_{jj}| = |\mathbf{e}_j^\top E_t \mathbf{e}_j| \leq \|\mathbf{e}_j\|_2\|E_t \mathbf{e}_j\|_2 \leq \|E_t\|_2 \leq \epsilon.
\]
Consequently, we can bound \(R_t\) in the infinity norm as
\[
\|R_t\|_\infty \le \frac{1-\beta_2}{n} \sum_{i=1}^{t} \beta_2^{t-i} \,\epsilon = (1-\beta_2^t)\frac{\epsilon}{n} \le \frac{\epsilon}{n}.
\]
It immediately follows that the difference between any two entries of \(v_t\) is entirely determined by \(R_t\). That is, for any \(j,k \in \{1, \dots, m\}\), we have
\[
|(v_t)_j-(v_t)_k|
=
|(R_t)_j-(R_t)_k|
\le
2\|R_t\|_\infty
\le
\frac{2\epsilon}{n}.
\]
This bounded variation in \(v_t\) directly translates into a bounded variation in the final adaptive scaling factors \(\frac{1}{\sqrt{v_t}+\delta}\). Specifically, by the Mean Value Theorem, there exists some constant \(C>0\) such that, for any \(j,k \in \{1, \dots, m\}\), we have
\[ \left| \frac{1}{\sqrt{(v_t)_j}+\delta} - \frac{1}{\sqrt{(v_t)_k}+\delta} \right| \leq \frac{C\epsilon}{n}. \]
In practice, Newton--Schulz-based orthogonalization rapidly reduces the orthogonality residual as iterations proceed, typically resulting in a very small \(\epsilon\). Consequently, the adaptive scaling factors exhibit only negligible variation across rows, rendering the row-wise adaptivity practically ineffective.

In short, the above analysis reveals that, in the \(m\le n\) regime, the apparent row-wise adaptivity in NorMuon is not driven by the underlying gradient statistics or curvature information that adaptive optimizers are typically designed to capture. Instead, the remaining row-wise variation in the adaptive scaling factors is solely induced by the orthogonalization residual. In other words, the observed adaptivity of NorMuon is merely a byproduct of inexact orthogonalization. As \(\epsilon\to0\), this residual-induced adaptivity vanishes, and the behavior of approximate orthogonalization converges to that of exact orthogonalization, recovering uniform, non-adaptive row-wise scaling.

\paragraph*{Remark.} A counterintuitive implication of the preceding analysis, which we term the \textbf{\textit{Orthogonalization--Adaptivity Paradox}}, is that the adaptivity of NorMuon is inversely coupled with the quality of orthogonalization. The more accurate the orthogonalization, the weaker the adaptivity becomes; in the limit of exact orthogonalization, the adaptivity disappears entirely. Conversely, less accurate orthogonalization permits greater variation in the row-wise scaling factors and thus allows for stronger adaptivity. This suggests that NorMuon's adaptive mechanism essentially serves to compensate for imperfections in orthogonalization, rather than providing genuine optimization-driven adaptivity. This stands in stark contrast to the design philosophy of Muon, where accurate orthogonalization is the cornerstone of the optimizer. Consequently, NorMuon's adaptive mechanism is intrinsically incompatible with ongoing research efforts to improve Muon through increasingly accurate orthogonalization~\citep{grishina2025accelerating, he2025root, amsel2026polarexpress}.

Note that the above analysis relies on the full row rank assumption. In fact, this already covers the generic case encountered in practice. Rank deficiency is an undesirable edge case that can hinder effective training and impair the model's representational capacity, and is thus generally avoided in practice. Since it rarely occurs, we do not treat it separately here or in the subsequent analysis. We do emphasize, however, that even under rank deficiency, the row-wise scaling of NorMuon remains entirely determined by the orthogonalization procedure rather than by the optimization dynamics. A more detailed discussion of this case will be provided in an extended version of this work.

\subsection{The \texorpdfstring{$m > n$}{m > n} Regime}
\label{derive}

In the $m > n$ regime, where the matrix is tall, the situation is fundamentally different. The row-orthonormality property $O_t O_t^\top = I_m$ no longer holds due to the dimensional constraint $m > n$. Instead, under exact orthogonalization, the column-orthonormality property $O_t^\top O_t = I_n$ holds. This implies that every column of $O_t$ has unit norm, i.e., $\|(O_t)_{:,j}\|_2^2 = 1$ for each $j \in \{1, \dots, n\}$. In contrast, since $O_t O_t^\top \neq I_m$, 
the squared norm of each row of $O_t$ is no longer constrained to $1$. Nevertheless, the aggregate row energy is constrained to be fixed. Specifically, we have
\begin{equation*}
\sum_{i=1}^m \|(O_t)_{i,:}\|_2^2 = \sum_{i=1}^m \sum_{j=1}^n (O_t)_{ij}^2 = \sum_{j=1}^n \sum_{i=1}^m (O_t)_{ij}^2 = \sum_{j=1}^n \|(O_t)_{:,j}\|_2^2 = n,
\end{equation*}
where the last equality follows from the column-orthonormality of $O_t$. Since there is no row orthonormality constraint, this fixed aggregate energy is no longer forced to be uniformly distributed across the $m$ rows. Consequently, the squared norm of each row can deviate substantially from the uniform allocation $n/m$, inducing row-wise variation in $\operatorname{mean}_{\mathrm{cols}}(O_t \odot O_t)$ and hence in the resulting adaptive scaling factors. This indicates that, in the $m > n$ regime, NorMuon's adaptivity is not driven by conventional optimization dynamics underlying most existing adaptive optimizers, but instead stems from the intrinsic uneven row-energy distribution induced by tall-matrix orthogonalization.

The above analysis focuses on exact orthogonalization. For approximate orthogonalization, the analysis largely follows that of the $m \le n$ regime, where deviations from exact orthogonality further introduce residual-induced variation in the adaptive scaling factors. This residual effect also diminishes as the orthogonalization accuracy improves, following the same Orthogonalization--Adaptivity Paradox.

\paragraph*{Summary.} Our analysis in the three preceding subsections reveals that NorMuon's adaptivity is not optimization-driven in the conventional sense. Instead, it primarily serves as a compensation mechanism for the non-uniformity induced by orthogonalization, including residual errors and uneven row-energy distribution. Although this compensation can provide empirical benefits, the resulting adaptivity is largely an orthogonalization-induced \textit{byproduct} rather than a principled adaptive mechanism. Furthermore, approximate orthogonalization gives rise to the Orthogonalization--Adaptivity Paradox, revealing an inherent limitation of this form of adaptation. These findings motivate the design of a more principled adaptive mechanism tailored specifically to Muon.

\section{Decoupled Geometry-Aligned Adaptive Scaling}

In this section, we present a novel adaptive scaling mechanism specifically designed for Muon. We first outline the core design principles and then formally justify the orthogonality-preserving property of the resulting scaling strategy. We further discuss additional design considerations for constructing the adaptive scaling factors. The complete optimization procedure is summarized in Algorithm~\ref{alg:A-Muon}\footnote{We omit the weight decay term here for generality, as it is independent of the core algorithm and various variants can be seamlessly incorporated (e.g., standard or cautious weight decay~\citep{chen2026cautious}).}.

\begin{algorithm}[t]
\caption{Decoupled Geometry-Aligned Adaptive Muon (\textbf{DGA-Muon})}
\label{alg:A-Muon}
\begin{algorithmic}[1]
\STATE \textbf{Input:} Step size $\eta > 0$, decay rates $\beta_1, \beta_2 \in [0, 1)$, stabilizer $\delta > 0$, clip ratios $c \in (0, 1], d \ge 1$\STATE \textbf{Initialize:} $W_1 \in \mathbb{R}^{m \times n}$, $M_0 = \mathbf{0} \in \mathbb{R}^{m \times n}$, $v_0 = \mathbf{0} \in \mathbb{R}^{\min(m, n)}$
\FOR{$t=1,\dots,T$}
    \STATE Compute stochastic gradient $G_t$
    \STATE Update first moment estimate
    \(M_t = \beta_1 M_{t-1} + (1-\beta_1)G_t\)
    \STATE Update second moment estimate from raw gradients\vspace{-1mm}
    \[
    v_t = 
    \begin{cases}
    \beta_2 v_{t-1} + (1-\beta_2)\operatorname{diag}(G_t G_t^\top), & m \le n, \\
    \beta_2 v_{t-1} + (1-\beta_2)\operatorname{diag}(G_t^\top G_t), & m > n.
    \end{cases}
    \]\vspace{-1mm}
    \STATE Apply bias correction $\hat{v}_t = v_t / (1 - \beta_2^t)$
    \STATE Compute adaptive scaling factors $s_t = 1 / (\sqrt{\hat{v}_t} + \delta)$
    \STATE Compute median of scaling factors $s_{\text{med}} = \operatorname{median}(s_t)$
    \STATE Clip scaling factors $s_t \leftarrow \operatorname{clip}(s_t, \, c \cdot s_{\text{med}}, \, d \cdot s_{\text{med}})$
    \STATE Orthogonalize first moment $O_t = \operatorname{Orth}(M_t)$
    \STATE Apply geometry-aligned adaptive scaling\vspace{-1mm}
    \[
    \widetilde{O}_t = 
    \begin{cases}
    \operatorname{Diag}(s_t)\hspace{0.3mm} O_t, & m \le n, \\
    O_t\operatorname{Diag}(s_t), & m > n.
    \end{cases}\qquad\hspace{5mm}
    \]\vspace{-1mm}
    \STATE Update parameters $W_{t+1} = W_t - \eta\hspace{0.2mm} \widetilde{O}_t$
\ENDFOR
\end{algorithmic}
\end{algorithm}

\subsection{Design Principles: Decoupled Adaptivity and Orthogonality Preservation}

We begin by examining the root cause of the limitations of NorMuon's adaptivity identified in the preceding section. Specifically, our analysis reveals that these limitations stem from the inherent coupling between orthogonalization and adaptive scaling. Because the scaling factors are computed from the orthogonalized momentum, the resulting adaptivity is inherently dictated by orthogonalization-induced geometry rather than direct gradient information. This analysis naturally motivates a core design principle: the computation of adaptive scaling factors should be decoupled from the orthogonalization procedure. Guided by this principle, we propose a decoupled adaptive mechanism that computes the adaptive scaling factors directly from the raw gradients, rather than from the orthogonalized momentum, while applying these factors to the orthogonalized update, i.e., the polar factor. In this way, the adaptive scaling is no longer dictated by orthogonalization-induced geometry, but instead reflects genuine optimization-relevant information---a defining characteristic of standard adaptive optimizers.

While decoupling the computation of adaptive scaling factors resolves the fundamental coupling between adaptivity and orthogonalization, a closer examination of the geometric structure of the orthogonalized update reveals another design deficiency in NorMuon. Specifically, in the $m > n$ regime, the polar factor has orthonormal columns, which define the orthogonal directions of the update, whereas NorMuon applies adaptive scaling along the row dimension. This dimensional mismatch inevitably distorts these orthogonal directions and destroys the orthogonality that the polar factor is designed to preserve. This observation motivates a second design principle: adaptive scaling should respect the intrinsic geometry of the orthogonalized update. Intuitively, since orthogonalization is designed to extract a set of orthogonal update directions, any subsequent adaptive scaling should preserve this directional structure rather than distort it. Following this principle, we introduce a geometry-aligned scaling mechanism that applies row-wise scaling for $m < n$ and column-wise scaling for $m > n$, while either choice can be used when $m = n$, thereby preserving the orthogonality of the update in all regimes.

\subsection{Formal Justification of the Orthogonality-Preserving Property}
\label{formal}

We now provide a formal justification of the orthogonality-preserving property for the proposed geometry-aligned scaling mechanism. We first consider the $m>n$ regime under exact orthogonalization, where the polar factor $O_t\in\mathbb{R}^{m\times n}$ satisfies $O_t^\top O_t=I_n$. In this setting, the columns of $O_t$ form a set of mutually orthogonal directions. Let $v_t\in\mathbb{R}^n$ denote the column-wise second-moment estimate, where each entry tracks the EMA of the squared $\ell_2$-norm of the corresponding column of the raw gradients, i.e.,
\begin{equation*}
v_t = \beta_2 v_{t-1} + (1-\beta_2)\operatorname{diag}\!\left(G_t^\top G_t\right).
\end{equation*}
Note that the Gram matrix here is introduced only for notational convenience; in practice, only its diagonal entries are needed, so the full Gram matrix is never explicitly formed. The associated diagonal scaling matrix is constructed as $H_t = \operatorname{Diag}(1/(\sqrt{v_t}+\delta))  \in\mathbb{R}^{n\times n}$, where $\operatorname{Diag}(\cdot)$ denotes the operator that constructs a diagonal matrix from a vector. Applying column-wise adaptive scaling to  $O_t$ is equivalent to right-multiplying \(O_t\) by $H_t$, yielding the scaled update $\widetilde{O}_t=O_tH_t$. The Gram matrix of the scaled update satisfies
$$\widetilde{O}_t^\top \widetilde{O}_t = H_t^\top O_t^\top O_t H_t = H_t^\top \!I_n H_t =H_t^2,$$ 
which remains diagonal. This implies that the columns of $\widetilde{O}_t$ remain mutually orthogonal, i.e.,
\[\langle \widetilde{O}_{:,i},\, \widetilde{O}_{:,j} \rangle = 0,\,\, i \neq j,
\]
while their squared norms satisfy
\[
\|\widetilde{O}_{:,i}\|_2^2 = (H_t^2)_{ii},
\]
indicating that the magnitude of each orthogonal direction is independently scaled by its corresponding adaptive scaling factor. Thus, column-wise adaptive scaling transforms the polar factor with unit column norms into an update with orthogonal columns but heterogeneous column norms. It preserves the orthogonal directions extracted by orthogonalization while assigning an adaptive step size to each direction. In contrast, applying row-wise adaptive scaling to $O_t$ is equivalent to left-multiplying $O_t$ by the corresponding diagonal scaling matrix $D_t\in\mathbb{R}^{m\times m}$, constructed analogously from the row-wise second-moment estimate, yielding the scaled update
$\widetilde{O}_t = D_t O_t$. The resulting Gram matrix becomes
\[
\widetilde{O}_t^\top \widetilde{O}_t = O_t^\top D_t^\top D_t O_t = O_t^\top D_t^2 O_t,
\]
which is generally non-diagonal. This indicates that row-wise adaptive scaling reintroduces coupling between the orthogonal column directions extracted by orthogonalization, and hence the orthogonal structure established by orthogonalization is no longer preserved in the $m>n$ regime.

For the $m<n$ regime, the analysis proceeds analogously, with the roles of rows and columns reversed. In this case, we construct the adaptive scaling factors using the row-wise second-moment estimate $v_t\in\mathbb{R}^m$, where each entry tracks the EMA of the squared $\ell_2$-norm of the corresponding row of the raw gradients, i.e., 
\begin{equation*}
v_t
=
\beta_2 v_{t-1}
+
(1-\beta_2)\operatorname{diag}(G_tG_t^\top).
\end{equation*}
The polar factor $O_t$ satisfies $O_t O_t^\top = I_m$, implying that the rows of $O_t$ form a set of mutually orthogonal directions. Applying row-wise adaptive scaling to $O_t$ is equivalent to left-multiplying $O_t$ by the corresponding diagonal scaling matrix, thereby preserving the row-wise orthogonal structure. In contrast, applying column-wise adaptive scaling, equivalently right-multiplying $O_t$ by the corresponding diagonal scaling matrix, generally distorts the row-wise orthogonal structure. For the \(m=n\) regime, the polar factor \(O_t\) is a square orthogonal matrix, satisfying both \(O_t O_t^\top = I_m\) and \(O_t^\top O_t = I_m\). Hence, both the rows and columns of \(O_t\) form sets of mutually orthogonal directions, and either row-wise or column-wise adaptive scaling preserves the corresponding orthogonal directions, adjusting only their magnitudes.

\noindent\textbf{Remark: Broader Insights for Matrix-Aware Optimizer Design.} 
The design principles and geometric justifications presented above provide deeper insights into the general paradigm of matrix-aware optimization.
\begin{enumerate}[leftmargin=15pt]
\item \textbf{A Generalized Design Philosophy.} Our decoupled, direction-aware adaptive mechanism transcends being a mere enhancement for Muon; it constitutes a more general design philosophy for matrix-aware optimizers. Notably, the directions utilized need not be restricted to the orthogonal directions used by Muon, but can naturally extend to broader classes of directions that characterize matrices.
\item \textbf{Intrinsic Necessity of Row-Wise/Column-Wise Adaptivity.} This perspective formally establishes why row-wise or column-wise adaptivity is fundamentally necessary: it provides a natural mechanism for adapting to the one-sided orthogonal directions of matrices. Rather than viewing row-wise/column-wise scaling as a structural approximation to Adam-like element-wise scaling (as in Adam-mini~\citep{zhang2025adam} and other related work), our approach grounds it in the intrinsic geometry of matrices~\citep{zhang2026matrixadagrad}. Furthermore, although methods such as NorMuon also employ row-wise/column-wise adaptation, our motivation and design arise from a distinct geometric alignment principle rather than heuristic normalization.
\end{enumerate}

\subsection{Additional Design Considerations}

Beyond the two aforementioned design principles, we further incorporate several additional considerations when constructing the adaptive scaling factors to better regulate their dynamics during optimization. First, the second-moment estimate is constructed from the sum rather than the mean of the squared entries. While this is already implicit in the construction of $v_t$ above, we make this design choice more explicit here. For a row or column gradient vector, the corresponding second-moment term is given by its squared $\ell_2$-norm, without normalization by the vector dimension. This reflects our design philosophy of treating each row (or column) gradient vector as a single adaptive unit, with its scaling determined by the total accumulated gradient energy of the vector. Consequently, units with larger accumulated gradient energy receive smaller scaling factors, while units with lower accumulated energy are assigned larger ones. This contrasts with NorMuon, which normalizes the row-wise gradient energy by the row dimension, making its adaptive scaling depend on the average per-coordinate energy rather than the total energy of each row.

Since the second-moment estimate $v_t$ is computed directly from raw gradients and is decoupled from orthogonalization, it suffers from the standard initialization bias of EMA estimators. In particular, during the early optimization steps, $v_t$ tends to underestimate the underlying second moment due to the zero initialization of the accumulator. We therefore apply the standard bias-correction technique to obtain a bias-corrected estimate \(\hat{v}_t = \frac{v_t}{1-\beta_2^t},\) which is then used to compute the adaptive scaling factor \(s_t=\frac{1}{\sqrt{\hat{v}_t}+\delta}\).\footnote{Bias correction can be replaced with a gradient normalization technique, which we will include in revision.} Here \(v_t\), \(\hat{v}_t\), and \(s_t\) are generic notations that can represent vectors in either \(\mathbb{R}^{m}\) or \(\mathbb{R}^{n}\). Note that the first moment $M_t$ requires no bias correction, owing to the scale-invariance of the subsequent orthogonalization operation~\citep{pethick2025training}.

While introducing adaptive scaling into Muon provides more fine-grained control over different adaptive units, it may also lead to excessive anisotropic scaling, resulting in overly imbalanced updates across different orthogonal directions. Such excessive anisotropy necessitates the use of smaller step sizes to maintain stable optimization. To preserve Muon’s favorable property of supporting large step sizes, we introduce a clipping operation that bounds the degree of adaptivity. Specifically, we apply the following element-wise clipping operation
\[
\operatorname{clip}(s_t,s_{t,\min},s_{t,\max})
=
\min(\max(s_t,s_{t,\min}),\,s_{t,\max}).
\]
We determine the clipping bounds $s_{t,\min}$ and $s_{t,\max}$ adaptively based on the median of the scaling factors, denoted by \(s_{\mathrm{med}}=\operatorname{median}(s_t)\). Specifically, we set $s_{t,\min} = c \cdot s_{\mathrm{med}}$ and $s_{t,\max} = d \cdot s_{\mathrm{med}},$
where $c \in (0, 1]$ and $d \ge 1$ control the allowable dynamic range. One can also use a single hyperparameter by setting \(d=1/c\). The clipping operation effectively bounds the condition number of the induced diagonal scaling matrix and prevents disproportionately large or small updates along specific orthogonal directions. When \(s_{t,\min}=s_{t,\max}\), the clipping removes all relative adaptivity, reducing the update to a uniformly scaled Muon update.

\section{Theoretical Analysis}

In this section, we establish the convergence guarantee of DGA-Muon under exact orthogonalization. We first introduce the assumptions required for the analysis.

\paragraph{Assumptions.}
We make the following assumptions throughout the analysis.
\begin{enumerate}
    \item[(A1)] (\(L\)-smoothness)
    \(\|\nabla f(X)-\nabla f(Y)\|_F
    \le L\|X-Y\|_F,
    \)
    for all \(X,Y\).
\item[(A2)] (Unbiased stochastic gradients with bounded variance)
    The stochastic gradients satisfy
    \[
    \mathbb{E}[G_t\mid\mathcal{F}_{t-1}]
    =
    \nabla f(W_t),
    \quad
    \mathbb{E}\|G_t-\nabla f(W_t)\|_F^2
    \le \sigma^2 .
    \]
\item[(A3)] (Bounded stochastic gradients)
There exists a constant $G_{\max}>0$ such that
\[
\|G_t\|_F \le G_{\max} \,\,\, \text{for all } t.
\]
\end{enumerate}

We now proceed with the convergence analysis. By the $L$-smoothness of $f$ and the update rule $W_{t+1} = W_t-\eta \hspace{0.3mm}\widetilde{O}_t$, we have
\[
\begin{aligned}
f(W_{t+1})
&\le f(W_t)
    + \langle \nabla f(W_t),\, W_{t+1}-W_t \rangle_F
    + \frac{L}{2}\|W_{t+1}-W_t\|_F^2 \\
&= f(W_t)
    - \eta \langle \nabla f(W_t),\, \widetilde{O}_t \rangle_F
    + \frac{\eta^2L}{2}\|\widetilde{O}_t\|_F^2 .
\end{aligned}
\]
We first analyze the quadratic term $\|\widetilde{O}_t\|_F^2$. For the \(m\le n\) regime, the geometry-aligned adaptive update \(\widetilde{O}_t\) is given by \(\widetilde{O}_t=\operatorname{Diag}(s_t)\hspace{0.2mm}O_t\). Since \(O_t=U_tV_t^\top\) is the polar factor of \(M_t\), we have
\(O_tO_t^\top
= U_tV_t^\top V_tU_t^\top = U_tU_t^\top
= I_m .\)
Then using the definition of the Frobenius norm and the cyclic property of the trace, we have
\[
\|\widetilde{O}_t\|_F^2 = \operatorname{tr}(\widetilde{O}_t^\top \widetilde{O}_t) = \operatorname{tr}\left(O_t^\top\operatorname{Diag}(s_t)^2O_t\right) = \operatorname{tr}\left(\operatorname{Diag}(s_t)^2O_tO_t^\top\right) = \operatorname{tr}\left(\operatorname{Diag}(s_t)^2\right) = \sum_{i=1}^{m}s_{t,i}^{2}.
\]
Let $s_{t,\max}=\max_i s_{t,i}$ denote the maximum entry of the scaling vector $s_t$, then we have 
\[
\|\widetilde{O}_t\|_F^2 =  \sum_{i=1}^{m}s_{t,i}^{2} \le m \hspace{0.3mm}s_{t,\max}^2.
\]
For the \(m>n\) regime, the adaptive update is given by \(\widetilde{O}_t=O_t\operatorname{Diag}(s_t)\). By definition of the polar factor, we have \(O_t^\top O_t=I_n\). Applying an analogous derivation, we obtain
\(\|\widetilde{O}_t\|_F^2
=
\sum_{j=1}^{n}s_{t,j}^2 \le n\hspace{0.3mm} s_{t,\max}^{2}.
\) By introducing \(\ell = \min(m,n)\), both regimes can be unified and compactly written as \(\|\widetilde{O}_t\|_F^2 \le \ell s_{t, \max}^2\). Consequently, the quadratic term in the smoothness inequality satisfies
\[
\frac{\eta^2L}{2}\|\widetilde{O}_t\|_F^2
=
\frac{\eta^2L}{2}
\sum_{k=1}^{\ell}s_{t,k}^2  \le \frac{\eta^2 \ell Ls_{t,\max}^{2}}{2}.
\]
We now analyze the Frobenius inner product term \(\langle \nabla f(W_t), \widetilde{O}_t\rangle_F.\) By decomposing \(\nabla f(W_t)\) into \(M_t\) and the residual \(\nabla f(W_t) - M_t\), we obtain
\[
\langle \nabla f(W_t), \widetilde{O}_t \rangle_F
= \langle M_t, \widetilde{O}_t\rangle_F + \langle \nabla f(W_t) - M_t, \widetilde{O}_t \rangle_F.
\]
The first term characterizes the alignment of the adaptively scaled orthogonal direction with the momentum, and the second term captures the momentum tracking error due to the discrepancy between $M_t$ and the current gradient. We handle these two terms separately in the following. To begin, we address the first term, for which we have the following lemma.

\begin{lemma}[Adaptive Polar Alignment Lower Bound]
\label{polar-align}
Let $M_t$, $O_t$, $\widetilde{O}_t$ and $s_t$ be defined as in Algorithm~\ref{alg:A-Muon}. Let $s_{t,\min}=\min_i s_{t,i}$ denote the minimum entry of the scaling vector $s_t$. Then, the following lower bound on the adaptive polar alignment holds
\[
\langle M_t, \widetilde{O}_t \rangle_F \ge s_{t,\min} \| M_t \|_F.
\]
\end{lemma}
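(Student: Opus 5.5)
The plan is a direct computation through the thin SVD of $M_t$. Write $M_t=U\Sigma V^\top$ with $\Sigma=\operatorname{Diag}(\sigma_1,\dots,\sigma_r)$, $\sigma_k\ge 0$, $U^\top U=V^\top V=I_r$, so that $O_t=UV^\top$. Set $D=\operatorname{Diag}(s_t)$ and note that every entry of $s_t$ is positive, so $D\succeq s_{t,\min} I$. The idea is that the inner product reduces to a $\Sigma$-weighted sum of Rayleigh quotients of $D$. Each such quotient is at least $s_{t,\min}$, and the remaining sum of singular values is the nuclear norm, which dominates the Frobenius norm.

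\emph{Case $m\le n$}, where $\widetilde O_t=DO_t$. Using the trace form of the Frobenius inner product and the cyclic property,
\[
\langle M_t,\widetilde O_t\rangle_F=\operatorname{tr}\bigl(V\Sigma U^\top D U V^\top\bigr)=\operatorname{tr}\bigl(\Sigma\, U^\top D U\, V^\top V\bigr)=\operatorname{tr}\bigl(\Sigma\, U^\top D U\bigr)=\sum_{k=1}^{r}\sigma_k\, u_k^\top D u_k .
\]
Here $u_k$ is the $k$-th column of $U$. Since $\|u_k\|_2=1$ and $D$ is diagonal with entries at least $s_{t,\min}$, we get $u_k^\top D u_k=\sum_i s_{t,i}u_{k,i}^2\ge s_{t,\min}$. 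As $\sigma_k\ge 0$, it follows that
\[
\langle M_t,\widetilde O_t\rangle_F\ge s_{t,\min}\sum_k\sigma_k=s_{t,\min}\|M_t\|_* .
\]

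\emph{Case $m>n$}, where $\widetilde O_t=O_tD$. The symmetric computation applies:
\[
\operatorname{tr}\bigl(V\Sigma U^\top U V^\top D\bigr)=\operatorname{tr}\bigl(\Sigma\,V^\top D V\bigr)=\sum_k\sigma_k\, v_k^\top D v_k\ge s_{t,\min}\|M_t\|_* .
\]
This uses that the columns $v_k$ of $V$ are unit vectors. Both cases therefore reduce to the same Rayleigh-quotient bound, applied on the side where the scaling acts.

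Finally, $\|M_t\|_*=\sum_k\sigma_k\ge\bigl(\sum_k\sigma_k^2\bigr)^{1/2}=\|M_t\|_F$, because all $\sigma_k$ are nonnegative, and multiplying by $s_{t,\min}>0$ gives the claim. The main point needing care is that the argument should not rely on full rank or on $U$ (resp.\ $V$) being square. It only uses orthonormality of the columns of $U$ and $V$ in the thin SVD, so the rank-deficient case is covered automatically. One also checks that $s_{t,\min}>0$: the clipped factors lie in $[c\,s_{\mathrm{med}},\,d\,s_{\mathrm{med}}]$, and $s_{\mathrm{med}}>0$ because every raw $s_{t,i}=1/(\sqrt{\hat v_{t,i}}+\delta)>0$. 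This positivity is what makes $D\succeq s_{t,\min}I$ valid.
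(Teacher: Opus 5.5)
Your proof is correct and follows essentially the same route as the paper. You use the thin SVD and the cyclic property of the trace to reduce $\langle M_t,\widetilde O_t\rangle_F$ to $\sum_k\sigma_k\,(U^\top DU)_{kk}$ (or $\sum_k\sigma_k\,(V^\top DV)_{kk}$ when $m>n$), bound each diagonal entry below by $s_{t,\min}$ using unit-norm columns, and finish with $\|M_t\|_*\ge\|M_t\|_F$; your compact-SVD formulation also covers the rank-deficient case the paper sets aside. One small correction: $D\succeq s_{t,\min}I$ holds for any diagonal $D$ by definition of the minimum, so positivity of $s_{t,\min}$ is needed only in the final step, where you multiply $\|M_t\|_*\ge\|M_t\|_F$ by $s_{t,\min}$.
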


\begin{proof}
We first consider the $m \le n$ regime, where the adaptive scaling is applied via left multiplication: $\widetilde{O}_t = \operatorname{Diag}(s_t)\hspace{0.2mm} O_t$. Substituting $M_t = U_t \Sigma_t V_t^\top$ and $\widetilde{O}_t = \operatorname{Diag}(s_t)\hspace{0.2mm}U_t V_t^\top$ into the Frobenius inner product $\langle M_t, \widetilde{O}_t \rangle_F = \operatorname{tr}(M_t^\top \widetilde{O}_t)$, we obtain
\begin{align*}
\langle M_t, \widetilde{O}_t \rangle_F 
&= \operatorname{tr}( (U_t \Sigma_t V_t^\top)^\top (\operatorname{Diag}(s_t)\hspace{0.2mm} U_t V_t^\top) ) = \operatorname{tr}( V_t \Sigma_t U_t^\top \!\operatorname{Diag}(s_t)\hspace{0.2mm} U_t V_t^\top).
\end{align*}
Using the cyclic property of the trace and the fact that $V_t$ has orthonormal columns (i.e., $V_t^\top V_t = I_{m}$), and expanding the trace, we have
\begin{align*}
\langle M_t, \widetilde{O}_t \rangle_F = \operatorname{tr}( \Sigma_t U_t^\top\! \operatorname{Diag}(s_t)\hspace{0.2mm} U_t) = \sum_{j=1}^{m} \sigma_j \, (U_t^\top \!\operatorname{Diag}(s_t)\hspace{0.2mm} U_t)_{jj},
\end{align*}
where $\{\sigma_j\}_{j=1}^{m}$ are the singular values of $M_t$. By definition, the diagonal entries of $U_t^\top\!\operatorname{Diag}(s_t)\hspace{0.2mm} U_t$ are given by
\[
(U_t^\top\!\operatorname{Diag}(s_t)\hspace{0.2mm} U_t)_{jj} = \sum_{i=1}^{m} s_{t,i} \, U_{t,ij}^2,
\]
where $\{s_{t,i}\}_{i=1}^{m}$ are the entries of $s_t$. Recall that \(s_{t,\min}\) denotes the minimum scaling factor of \(s_t\), i.e.,
\(s_{t,\min}=\min_i s_{t,i}.\)
Then, for each \(j\), we have
\[
\begin{aligned}
\left(
U_t^\top\!\operatorname{Diag}(s_t)\hspace{0.2mm}U_t
\right)_{jj}
&=
\sum_{i=1}^{m}s_{t,i}U_{t,ij}^2\ge
\sum_{i=1}^{m}s_{t,\min}U_{t,ij}^2=
s_{t,\min}\sum_{i=1}^{m}U_{t,ij}^2=
s_{t,\min},
\end{aligned}
\]
where the last equality follows from the column orthonormality of \(U_t\), i.e., \(U_t^\top U_t=I_m\). Therefore, we obtain a lower bound for \(\langle M_t,\widetilde O_t\rangle_F\)
\[
\begin{aligned}
\langle M_t,\widetilde O_t\rangle_F
&=
\sum_{j=1}^{m}\sigma_j(U_t^\top\!\operatorname{Diag}(s_t)\hspace{0.2mm}U_t )_{jj}\ge
s_{t,\min}\sum_{j=1}^{m}\sigma_j=
s_{t,\min}\|M_t\|_*,
\end{aligned}
\]
where $\| \cdot \|_*$ denotes the nuclear norm of a matrix. Finally, since the norm inequality \(\|A\|_*\ge\|A\|_F\) holds for any matrix \(A\)~\citep{horn2013matrix}, it follows that
\[
\langle M_t,\widetilde O_t\rangle_F
\ge
s_{t,\min}\|M_t\|_F .
\]
The $m>n$ regime is analogous, where the adaptive scaling is applied via right multiplication as $\widetilde{O}_t = O_t \operatorname{Diag}(s_t)$. Utilizing the column orthonormality of both $U_t$ and $V_t$, and following an analogous argument, we obtain the same bound. Combining the two regimes, we prove the claimed bound in the lemma.
\end{proof}

Now, by decomposing \(M_t\) into the sum of the gradient \(\nabla f(W_t)\) and the momentum tracking error \(M_t-\nabla f(W_t)\), and applying the reverse triangle inequality for the Frobenius norm, we have
\[
\begin{aligned}
\|M_t\|_F
&=
\|\nabla f(W_t)+(M_t-\nabla f(W_t))\|_F\ge
\|\nabla f(W_t)\|_F
-
\|M_t-\nabla f(W_t)\|_F .
\end{aligned}
\]
Combining this inequality with Lemma~\ref{polar-align}, we obtain
\[
\begin{aligned}
\langle M_t,\widetilde O_t\rangle_F
&\ge
s_{t,\min}\|M_t\|_F\ge
s_{t,\min}\|\nabla f(W_t)\|_F
-
s_{t,\min}\|M_t-\nabla f(W_t)\|_F .
\end{aligned}
\]
We now proceed to address the term $\langle \nabla f(W_t) - M_t, \widetilde O_t \rangle_F$. We first consider its negation. By the Cauchy-Schwarz inequality, we have
\[
\langle M_t - \nabla f(W_t), \widetilde O_t \rangle_F
    \le \|M_t - \nabla f(W_t)\|_F\,\! \|\widetilde O_t\|_F.
\]
As established earlier, we have $\|\widetilde{O}_t\|_F^2 \le \ell s_{t, \max}^2$ with $\ell = \min(m,n)$. Consequently, we obtain
\[
\langle M_t - \nabla f(W_t), \widetilde O_t \rangle_F
    \le \sqrt{\ell}\hspace{0.3mm}\|M_t - \nabla f(W_t)\|_F\hspace{0.3mm} s_{t,\max}.
\]
Therefore, by negating the above inequality, we obtain
\[
\langle \nabla f(W_t) - M_t, \widetilde O_t \rangle_F 
    \ge - \sqrt{\ell}\hspace{0.3mm}\|M_t - \nabla f(W_t)\|_F\hspace{0.3mm} s_{t,\max}.
\]
Substituting the above bounds for the alignment and tracking error terms into the smoothness inequality, we obtain the following descent lemma for DGA-Muon.

\begin{lemma}[Per-step Descent Inequality]
\label{descent-lemma}
Let $W_t$, $M_t$ and $s_t$ be defined as in Algorithm~\ref{alg:A-Muon}. Let $s_{t,\min}$ and $s_{t,\max}$ denote the minimum and maximum entries of $s_t$, respectively, and let $\ell=\min(m,n)$. Assume that $f$ is $L$-smooth. Then, the updates of DGA-Muon satisfy the following per-step descent inequality
\[
f(W_{t+1})\le f(W_t)
    - \eta\,\! s_{t,\min}\|\nabla f(W_t)\|_F + 
    \eta\,\!(s_{t,\min} + \sqrt{\ell}s_{t,\max}) \|M_t - \nabla f(W_t)\|_F
    + \frac{\eta^2\ell L s_{t,\max}^2}{2}.
\]
\end{lemma}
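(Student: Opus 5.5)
The plan is to assemble the per-step bound directly from the three pieces already derived before the statement, all under exact orthogonalization. First, the $L$-smoothness assumption (A1) applied to the update $W_{t+1}=W_t-\eta\widetilde{O}_t$ gives
\[
f(W_{t+1})\le f(W_t)-\eta\langle \nabla f(W_t),\widetilde{O}_t\rangle_F+\frac{\eta^2L}{2}\|\widetilde{O}_t\|_F^2 .
\]
The quadratic term is then handled by the trace computation already carried out. It uses $O_tO_t^\top=I_m$ for $m\le n$, or $O_t^\top O_t=I_n$ for $m>n$, to get $\|\widetilde{O}_t\|_F^2=\sum_{k=1}^{\ell}s_{t,k}^2\le \ell s_{t,\max}^2$. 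This produces the last term $\tfrac{\eta^2\ell Ls_{t,\max}^2}{2}$.

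Next, split the inner product as $\langle M_t,\widetilde{O}_t\rangle_F+\langle \nabla f(W_t)-M_t,\widetilde{O}_t\rangle_F$.

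For the first piece, invoke Lemma~\ref{polar-align} to get $\langle M_t,\widetilde{O}_t\rangle_F\ge s_{t,\min}\|M_t\|_F$. Then apply the reverse triangle inequality $\|M_t\|_F\ge \|\nabla f(W_t)\|_F-\|M_t-\nabla f(W_t)\|_F$. Since $s_{t,\min}>0$ (all $s_{t,i}=1/(\sqrt{\hat v_{t,i}}+\delta)>0$, and clipping against a positive median keeps them positive), multiplying preserves the direction of the inequality.

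For the second piece, Cauchy--Schwarz together with $\|\widetilde{O}_t\|_F\le\sqrt{\ell}\,s_{t,\max}$ gives $\langle \nabla f(W_t)-M_t,\widetilde{O}_t\rangle_F\ge -\sqrt{\ell}\,s_{t,\max}\|M_t-\nabla f(W_t)\|_F$.

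Finally, add the two lower bounds, multiply by $-\eta<0$ (which flips them into upper bounds), and substitute into the smoothness inequality. Collecting the two tracking-error contributions gives the coefficient $\eta(s_{t,\min}+\sqrt{\ell}s_{t,\max})$, which is exactly the claimed inequality.

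The only genuinely delicate step is Lemma~\ref{polar-align}, which is already established; everything else is bookkeeping. In this assembly, the one point to check is the sign handling. Specifically, positivity of $s_{t,\min}$ and $s_{t,\max}$ must be confirmed, and the inequalities must be stated in terms of the clipped $s_t$ actually used in $\widetilde{O}_t$, so that Lemma~\ref{polar-align} and the Frobenius-norm bound refer to the same vector.
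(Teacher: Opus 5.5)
Your proposal is correct and follows the paper's own derivation step for step: the $L$-smoothness expansion, the trace identity $\|\widetilde{O}_t\|_F^2=\sum_{k=1}^{\ell}s_{t,k}^2\le \ell\, s_{t,\max}^2$, Lemma~\ref{polar-align} combined with the reverse triangle inequality, and Cauchy--Schwarz on the tracking-error term. Your explicit check that the clipped factors stay positive (the lower clip $c\cdot s_{\mathrm{med}}$ is strictly positive) is a detail the paper leaves implicit, and it is needed when you multiply the reverse triangle inequality by $s_{t,\min}$.
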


Next, we analyze $\|M_t - \nabla f(W_t)\|_F$, which quantifies the deviation between the stochastic momentum and the true gradient. Recall that $M_t = \beta_1 M_{t-1} + (1-\beta_1) G_t$. Subtracting the true gradient \(\nabla f(W_t)\), we have
\[
M_t - \nabla f(W_t) = \beta_1 M_{t-1} + (1-\beta_1) G_t - \nabla f(W_t).
\]
To isolate the previous-step error, we add and subtract \(\beta_1 \nabla f(W_{t-1})\)
\[
M_t - \nabla f(W_t) = \beta_1 (M_{t-1} - \nabla f(W_{t-1})) + (1-\beta_1)(G_t - \nabla f(W_t)) + \beta_1 (\nabla f(W_{t-1}) - \nabla f(W_t)).
\]
Unrolling recursively from \(t\) down to $1$ gives
\[
\begin{aligned}
M_t - \nabla f(W_t) &= \beta_1^{\,t-1} (M_1 - \nabla f(W_1)) 
+ (1-\beta_1) \sum_{s=2}^{t} \beta_1^{\,t-s} (G_s - \nabla f(W_s)) \\
& + \beta_1 \sum_{s=2}^{t} \beta_1^{\,t-s} (\nabla f(W_{s-1}) - \nabla f(W_s)).
\end{aligned}
\]
Taking the Frobenius norm and applying the triangle inequality, we obtain
\[
\begin{aligned}
\|M_t - \nabla f(W_t)\|_F &\le \beta_1^{\,t-1} \|M_1 - \nabla f(W_1)\|_F 
+ \big\|(1-\beta_1) \sum_{s=2}^{t} \beta_1^{\,t-s} (G_s - \nabla f(W_s))\big\|_F \\
& + \beta_1 \sum_{s=2}^{t} \beta_1^{\,t-s} \|\nabla f(W_{s-1}) - \nabla f(W_s)\|_F.
\end{aligned}
\]
The right-hand side naturally decomposes into three contributions: the initialization term, the stochastic noise term, and the gradient drift term. In the following, we analyze each of them separately.

For the initialization term, since \(M_1=(1-\beta_1)G_1\), we decompose
\[
M_1-\nabla f(W_1)
=
(1-\beta_1)(G_1-\nabla f(W_1))
-\beta_1\nabla f(W_1).
\]
Therefore,
\[
\begin{aligned}
\mathbb E\|
\beta_1^{t-1}(M_1-\nabla f(W_1))
\|_F
\le&
\beta_1^{t-1}(1-\beta_1)
\mathbb E\|G_1-\nabla f(W_1)\|_F
+
\beta_1^t\|\nabla f(W_1)\|_F
\end{aligned}
\]
Using the unbiasedness of $G_1$, Jensen's inequality and the gradient bound assumption, we have
\[
\|\nabla f(W_1)\|_F
=
\|\mathbb E[G_1|\mathcal F_0]\|_F
\le
\mathbb E[\|G_1\|_F|\mathcal F_0]
\le G_{\max}.
\]
Then applying Jensen's inequality, \(\mathbb{E}\|X\|_F \le \sqrt{\mathbb{E}\|X\|_F^{2}}\), and using the bounded-variance assumption
\(\mathbb{E}\|G_1-\nabla f(W_1)\|_F^2 \le \sigma^2\), we obtain
\[
\mathbb{E}\big[\|\beta_1^{\,t-1}(M_1-\nabla f(W_1))\|_F\big]
\le \beta_1^{\,t-1}(1-\beta_1)\sigma + \beta_1^{\,t}G_{\max}.
\]
For the stochastic noise term, we first bound its second moment. For notational convenience, we introduce the shorthand
\(\mathcal{N}_t = (1-\beta_1)\sum_{s=2}^{t} \beta_1^{\,t-s}(G_s - \nabla f(W_s)).\) Expanding the squared Frobenius norm, taking the expectation, and using the linearity of expectation, we obtain
\[
\begin{aligned}
\mathbb{E}\|\mathcal{N}_t\|_F^2
&= (1-\beta_1)^2\,\mathbb{E}\,\! \Big\langle \sum_{s=2}^t \beta_1^{\,t-s} (G_s - \nabla f(W_s)),\; 
\sum_{r=2}^t \beta_1^{\,t-r} (G_r - \nabla f(W_r)) \Big\rangle \\
&= (1-\beta_1)^2 \sum_{s=2}^t \sum_{r=2}^t \beta_1^{\,2t - s - r} \,
\mathbb{E}\,\! \big\langle G_s - \nabla f(W_s),\; G_r - \nabla f(W_r) \big\rangle.
\end{aligned}
\]
By assumption (A2), the stochastic gradient noise \(\xi_t = G_t - \nabla f(W_t)\) forms a martingale difference sequence, i.e., \(\mathbb{E}[\xi_t \mid \mathcal{F}_{t-1}] = 0\), where \(\mathcal{F}_{t-1}\) denotes the filtration representing all information up to time \(t-1\). Hence, the cross terms (\(s \neq r\)) vanish in expectation, leaving only the diagonal terms
\[
\mathbb{E}\|\mathcal{N}_t\|_F^2
= (1-\beta_1)^2 \sum_{s=2}^t \beta_1^{\,2(t-s)} \, \mathbb{E}\|G_s - \nabla f(W_s)\|_F^2
\le (1-\beta_1)^2 \sigma^2 \sum_{k=0}^{t-2} \beta_1^{2k}.
\]
Evaluating the geometric sum gives
\[
\mathbb{E}\|\mathcal{N}_t\|_F^2 
\le (1-\beta_1)^2 \sigma^2 \frac{1 - \beta_1^{2(t-1)}}{1 - \beta_1^2} 
= \sigma^2 \frac{(1-\beta_1)(1 - \beta_1^{2(t-1)})}{1 + \beta_1} 
\le \sigma^2 \frac{1-\beta_1}{1 + \beta_1}.
\]
Then applying Jensen's inequality and using the bounded-variance assumption, we obtain
\[
\mathbb{E}\|\mathcal{N}_t\|_F \le \sqrt{\mathbb{E}\|\mathcal{N}_t\|_F^2} \le  \frac{\sqrt{1-\beta_1}}{\sqrt{1+\beta_1}}\,\!\sigma.
\]
We now proceed to handle the gradient drift term, which, unlike the stochastic noise term, can be bounded deterministically. 
By the $L$-smoothness of $f$, the update rule $W_s=W_{s-1}-\eta \widetilde O_{s-1}$, and the bound  $\|\widetilde{O}_t\|_F \le \sqrt{\ell} s_{t, \max}$, we have
\begin{align*}
\beta_1 \sum_{s=2}^{t} \beta_1^{\,t-s} \|\nabla f(W_{s-1}) - \nabla f(W_s)\|_F
&\le \beta_1 \sum_{s=2}^{t} \beta_1^{\,t-s} L \|W_{s-1} - W_s\|_F \\
&= \beta_1 \eta L \sum_{s=2}^{t} \beta_1^{\,t-s} \|\widetilde{O}_{s-1}\|_F \le \beta_1 \eta \sqrt{\ell} L\sum_{s=2}^{t} \beta_1^{\,t-s} s_{s-1, \max}.
\end{align*}
Let $s_{\max}$ be the maximum of all $s_{t, \max}$, then we obtain
\[
\beta_1 \sum_{s=2}^{t} \beta_1^{\,t-s} \|\nabla f(W_{s-1}) - \nabla f(W_s)\|_F
\le \beta_1 \eta \sqrt{\ell} L s_{\max}\sum_{s=2}^{t} \beta_1^{\,t-s}
\]
For the geometric sum, we have
\[
\sum_{s=2}^{t} \beta_1^{\,t-s} = \sum_{k=0}^{t-2} \beta_1^k = \frac{1 - \beta_1^{t-1}}{1 - \beta_1} \le \frac{1}{1 - \beta_1}.
\]
Therefore, the gradient drift term admits the following deterministic bound
\[
\beta_1 \sum_{s=2}^{t} \beta_1^{\,t-s} \|\nabla f(W_{s-1}) - \nabla f(W_s)\|_F 
\le \frac{\beta_1 \eta \sqrt{\ell} L s_{\max}}{1 - \beta_1}.
\]
Combining the bounds for these three terms yields the following bound on the expected momentum error.

\begin{lemma}[Expected Momentum Error Bound]
\label{momentum-error}
Let $M_t$ and $W_t$ be defined as in Algorithm~\ref{alg:A-Muon}. Let \(s_{\max}\) denote a uniform upper bound on the adaptive scaling factors satisfying \((s_t)_i \le s_{\max} < \infty\) for all $t$ and $i$, and let $\ell=\min(m,n)$. Assume that $f$ is $L$-smooth. Then, under the bounded-variance and bounded-gradient assumptions, the expected momentum error at each step satisfies
\[
\mathbb{E} \|M_t - \nabla f(W_t)\|_F \le \beta_1^{\,t-1}(1-\beta_1)\sigma + \beta_1^{\,t}G_{\max} + \frac{\sqrt{1-\beta_1}}{\sqrt{1+\beta_1}}\sigma + \frac{\beta_1 \eta \sqrt{\ell} L s_{\max}}{1 - \beta_1}.
\]
\end{lemma}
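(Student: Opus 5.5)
The bound follows by assembling the three estimates derived just before the statement. The starting point is the unrolled recursion
\[
M_t - \nabla f(W_t) = \beta_1^{\,t-1}(M_1-\nabla f(W_1)) + \mathcal{N}_t + \beta_1\sum_{s=2}^{t}\beta_1^{\,t-s}(\nabla f(W_{s-1})-\nabla f(W_s)).
\]
It comes from adding and subtracting $\beta_1\nabla f(W_{t-1})$ and iterating down to $t=1$. A quick induction on $t$ confirms this identity; the base case $t=1$ is trivial since both sums are empty.

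Take the Frobenius norm, apply the triangle inequality, and then take expectations. By linearity of expectation, $\mathbb{E}\|M_t-\nabla f(W_t)\|_F$ is at most the sum of the expectations of the three pieces, and each is bounded separately.

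\textbf{Initialization term.} Write $M_1-\nabla f(W_1) = (1-\beta_1)(G_1-\nabla f(W_1)) - \beta_1\nabla f(W_1)$. The first part is controlled by Jensen's inequality together with (A2), giving $\mathbb{E}\|G_1-\nabla f(W_1)\|_F \le \sigma$. The second uses $\|\nabla f(W_1)\|_F = \|\mathbb{E}[G_1\mid\mathcal{F}_0]\|_F \le G_{\max}$, which follows from conditional Jensen and (A3). Together these give $\beta_1^{\,t-1}(1-\beta_1)\sigma + \beta_1^{\,t}G_{\max}$.

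\textbf{Noise term.} Expand $\mathbb{E}\|\mathcal{N}_t\|_F^2$ as a double sum. For $s<r$, condition on $\mathcal{F}_{r-1}$: then $G_s-\nabla f(W_s)$ is measurable and $\mathbb{E}[\xi_r\mid\mathcal{F}_{r-1}]=0$, so the cross term vanishes. The diagonal terms sum to at most $(1-\beta_1)^2\sigma^2/(1-\beta_1^2) = \sigma^2(1-\beta_1)/(1+\beta_1)$. Jensen's inequality then converts this second-moment bound into the stated first-moment bound.

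\textbf{Drift term.} This one is deterministic. By (A1), $\|\nabla f(W_{s-1})-\nabla f(W_s)\|_F \le L\eta\|\widetilde O_{s-1}\|_F$. The earlier computation under exact orthogonalization gives $\|\widetilde O_{s-1}\|_F^2 = \sum_k s_{s-1,k}^2 \le \ell s_{\max}^2$. Summing the geometric series and bounding it by $1/(1-\beta_1)$ yields $\beta_1\eta\sqrt{\ell}Ls_{\max}/(1-\beta_1)$.

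The only delicate point is the noise term. The vanishing of cross terms requires that $W_s$, and hence $\nabla f(W_s)$, be $\mathcal{F}_{s-1}$-measurable. This holds because $W_s$ is determined by $G_1,\dots,G_{s-1}$, so I would state the filtration convention explicitly. One should also note that $s_{\max}<\infty$ is guaranteed, since $s_{t,i}\le 1/\delta$ before clipping and the median-based clipping does not increase the maximum entry. Everything else is routine.
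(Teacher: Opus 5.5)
Your proposal is correct and follows the paper's proof essentially step for step. It uses the same unrolled recursion, the same split into initialization, martingale-noise and drift terms, and the same bound on each: Jensen with (A2)/(A3), vanishing cross terms plus the geometric factor $(1-\beta_1)^2/(1-\beta_1^2)$, and $L$-smoothness with $\|\widetilde O_{s-1}\|_F\le\sqrt{\ell}\,s_{\max}$. Your remarks on the $\mathcal{F}_{s-1}$-measurability of $W_s$, and on $s_{\max}\le 1/\delta$ (median-based clipping with $c\le 1$ cannot raise the largest entry), are correct and make explicit points the paper leaves implicit.
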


Recall that Lemma~\ref{descent-lemma} establishes the following one-step
descent inequality for the updates of Algorithm~\ref{alg:A-Muon}
\[
\begin{aligned}
f(W_{t+1})
&\le f(W_t)
    - \eta s_{t,\min}\|\nabla f(W_t)\|_F 
    + \eta(s_{t,\min} + \sqrt{\ell}s_{t,\max}) \|M_t - \nabla f(W_t)\|_F
    + \frac{\eta^2\ell L s_{t,\max}^2}{2}.
\end{aligned}
\]
To simplify the subsequent analysis, we replace the time-dependent quantities by uniform bounds over the optimization horizon. Let \(s_{\min}\) denote a uniform lower bound on the adaptive scaling factors satisfying \(0 < s_{\min} \le (s_t)_i\) for all $t$ and $i$. Since
\[
s_{\min}\le s_{t,\min}\le s_{t,\max}\le s_{\max},
\]
we have
\[
-\eta s_{t,\min}\|\nabla f(W_t)\|_F
\le
-\eta s_{\min}\|\nabla f(W_t)\|_F,
\]
and
\[
s_{t,\min}+\sqrt{\ell}\,s_{t,\max}
\le
(1+\sqrt{\ell})s_{\max}.
\]
Moreover, \(s_{t,\max}^2\le s_{\max}^2.\) Therefore, the one-step descent inequality can be uniformly bounded as
\begin{align*}
f(W_{t+1})
&\le f(W_t)
    - \eta s_{\min}\|\nabla f(W_t)\|_F
    + \eta(1+\sqrt{\ell})s_{\max}
      \|M_t-\nabla f(W_t)\|_F
    + \frac{\eta^2\ell Ls_{\max}^2}{2}.
\end{align*}
Rearranging to isolate the gradient term and taking expectations on both sides yields
\begin{align*}
\eta s_{\min} \mathbb{E}\big[\|\nabla f(W_t)\|_F\big]
&\le \mathbb{E}[f(W_t) - f(W_{t+1})] 
+ \eta (1 + \sqrt{\ell})s_{\max} \mathbb{E}\big[\|M_t - \nabla f(W_t)\|_F\big] \\
&+ \frac{\eta^2\ell L s_{\max}^2}{2}.
\end{align*}
Then, summing over $t=1,\dots,T$, using the telescoping sum 
$\sum_{t=1}^{T} \mathbb{E}[f(W_t) - f(W_{t+1})] = \mathbb{E}[f(W_1) - f(W_{T+1})]$ , and subsequently dividing by $\eta s_{\min}$, we obtain
\begin{align*}
\sum_{t=1}^{T} \mathbb{E}\!\left[\|\nabla f(W_t)\|_F\right]
&\le \frac{\mathbb{E}[f(W_1) - f(W_{T+1})]}{\eta s_{\min}} + \frac{(1 + \sqrt{\ell})s_{\max}}{s_{\min}} \sum_{t=1}^{T} \mathbb{E}\!\left[\|M_t - \nabla f(W_t)\|_F\right]\\
&+ \frac{\eta \ell L s_{\max}^2 T}{2 s_{\min}}. 
\end{align*}
Summing the momentum error inequality given by Lemma~\ref{momentum-error} from $t=1$ to $T$ yields
\[
\sum_{t=1}^{T}\mathbb{E}\|M_t-\nabla f(W_t)\|_F
\le (1-\beta_1)\sigma\sum_{t=1}^{T}\beta_1^{\,t-1}
    + G_{\max}\sum_{t=1}^{T}\beta_1^{\,t}
    + T\frac{\sqrt{1-\beta_1}}{\sqrt{1+\beta_1}}\,\sigma
    + T\frac{\beta_1 \eta \sqrt{\ell} L s_{\max}}{1 - \beta_1}.
\]
Evaluating the geometric sums $\sum_{t=1}^{T} \beta_1^{\,t-1} \le \frac{1}{1-\beta_1}$ and $\sum_{t=1}^{T} \beta_1^{\,t} \le \frac{\beta_1}{1-\beta_1}$, we obtain
\[
\sum_{t=1}^{T}\mathbb{E}\|M_t-\nabla f(W_t)\|_F
\le \sigma
    + \frac{\beta_1 G_{\max}}{1-\beta_1}
    + T\frac{\sqrt{1-\beta_1}}{\sqrt{1+\beta_1}}\,\sigma
    + T\frac{\beta_1 \eta \sqrt{\ell} L s_{\max}}{1 - \beta_1}.
\]
Denote \(\Delta = f(W_1)-f^\star\), where \(f^\star = \inf_W f(W)\). Since \(f(W_{T+1})\ge f^\star\), we have
\[\mathbb{E}[f(W_1)-f(W_{T+1})]\le \Delta.\]
Substituting the bound for \(\sum_{t=1}^{T}\mathbb{E}\|M_t-\nabla f(W_t)\|_F\) and \(\mathbb{E}[f(W_1)-f(W_{T+1})] \le \Delta\) into the previous inequality, we obtain the final bound for the cumulative expected gradient norm:
\begin{align*}
\sum_{t=1}^{T} \mathbb{E}\!\left[\|\nabla f(W_t)\|_F\right]
&\le \frac{\Delta}{\eta s_{\min}} + \frac{(1 + \sqrt{\ell})s_{\max}}{s_{\min}} \left( \sigma + \frac{\beta_1 G_{\max}}{1-\beta_1} \right) \\
&\quad + T \frac{(1 + \sqrt{\ell})s_{\max}}{s_{\min}} \frac{\sqrt{1-\beta_1}}{\sqrt{1+\beta_1}}\,\sigma \\
&\quad + T \frac{\beta_1 \eta \sqrt{\ell} L (1+\sqrt{\ell}) s_{\max}^2}{(1 - \beta_1) s_{\min}} + T \frac{\eta \ell L s_{\max}^2}{2 s_{\min}}.
\end{align*}
Dividing both sides by $T$ immediately yields a bound on the average expected gradient norm. Since the minimum is always upper bounded by the average, i.e., $\min_{1 \le t \le T} \mathbb{E}\!\left[\|\nabla f(W_t)\|_F\right] \le \frac{1}{T} \sum_{t=1}^{T} \mathbb{E}\!\left[\|\nabla f(W_t)\|_F\right]$, we obtain the final convergence bound.
\begin{theorem}[Convergence Guarantee of DGA-Muon]
\label{convergence}
Let $W_t$ be defined as in Algorithm~\ref{alg:A-Muon}. Let \(s_{\min}\) and \(s_{\max}\) denote uniform lower and upper bounds on the adaptive scaling factors satisfying \(0 < s_{\min} \le (s_t)_i \le s_{\max} < \infty\) for all $t$ and $i$, and let $\ell = \min(m,n)$. Denote \(\Delta=f(W_1)-f^\star\). Assume that $f$ is $L$-smooth and that the stochastic gradients are unbiased with bounded variance and bounded norm. Then, for a fixed step size $\eta$, DGA-Muon satisfies the following convergence guarantee
\begin{align*}
\min_{1 \le t \le T} \mathbb{E}\!\left[\|\nabla f(W_t)\|_F\right]
&\le \frac{\Delta}{\eta s_{\min} T} + \frac{(1 + \sqrt{\ell})s_{\max}}{s_{\min} T} \left( \sigma + \frac{\beta_1 G_{\max}}{1-\beta_1} \right) \\
&\quad + \frac{(1 + \sqrt{\ell})s_{\max}}{s_{\min}} \frac{\sqrt{1-\beta_1}}{\sqrt{1+\beta_1}}\,\sigma \\
&\quad + \frac{\beta_1 \eta \sqrt{\ell} L (1+\sqrt{\ell}) s_{\max}^2}{(1 - \beta_1) s_{\min}} + \frac{\eta \ell L s_{\max}^2}{2 s_{\min}}.
\end{align*}
In particular, this bound can be summarized as
\[
\min_{1\le t\le T}
\mathbb E\!\left[
\|\nabla f(W_t)\|_F
\right]
=
O\!\left(
\frac1T+\sigma+\eta
\right).
\]
\end{theorem}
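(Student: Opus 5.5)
The plan is to assemble the theorem from Lemma~\ref{descent-lemma} and Lemma~\ref{momentum-error}, essentially following the chain of inequalities sketched just before the statement.

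\textbf{Uniformize the one-step inequality.} Start from Lemma~\ref{descent-lemma}. Replace the time-dependent quantities using $s_{\min}\le s_{t,\min}\le s_{t,\max}\le s_{\max}$:
\begin{itemize}
\item the descent coefficient becomes $-\eta s_{\min}\|\nabla f(W_t)\|_F$;
\item the error coefficient is bounded by $\eta(1+\sqrt{\ell})s_{\max}$;
\item the quadratic term is bounded by $\eta^2\ell L s_{\max}^2/2$.
\end{itemize}
Each substitution is valid because the quantity it multiplies is nonnegative.

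\textbf{Take expectations, telescope, and bound the momentum error.}
\begin{enumerate}
\item Take total expectations, sum over $t=1,\dots,T$, and telescope.
\item Use $\mathbb{E}[f(W_1)-f(W_{T+1})]\le \Delta$, which holds since $f(W_{T+1})\ge f^\star$.
\item Divide by $\eta s_{\min}$. This leaves the cumulative momentum error $\sum_t \mathbb{E}\|M_t-\nabla f(W_t)\|_F$ to control.
\item Sum Lemma~\ref{momentum-error} over $t$. The two geometric sums give $(1-\beta_1)\sigma\cdot\frac{1}{1-\beta_1}=\sigma$ and $G_{\max}\frac{\beta_1}{1-\beta_1}$. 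The remaining two terms are constant in $t$ and contribute a factor of $T$.
\item Substitute, divide by $T$, and use that the minimum is at most the average. This reproduces each displayed term exactly.
\end{enumerate}
The summary $O(1/T+\sigma+\eta)$ follows by treating $L,\ell,\beta_1,s_{\min},s_{\max},\Delta,G_{\max}$ as constants.

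\textbf{Main obstacle: justifying the inputs.} The algebra is routine; the delicate points are the assumptions the lemmas rely on.
\begin{itemize}
\item \emph{Existence of $s_{\min}, s_{\max}$.} The theorem assumes uniform bounds $0<s_{\min}\le s_{\max}<\infty$. These exist from the algorithm itself: $s_{t,i}\le 1/\delta$ before clipping, and $\hat v_t$ is bounded by $G_{\max}^2$ under (A3), giving a positive lower bound $1/(G_{\max}+\delta)$. Clipping relative to the median keeps entries within $[c\,s_{\rm med}, d\,s_{\rm med}]$, which stays within these bounds. I would note this but simply use the assumed bounds.
\item \emph{Correlation between $s_t$ and the error.} $s_t$ depends on $G_t$, so it is correlated with $\|M_t-\nabla f(W_t)\|_F$. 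Taking expectations of a product is harmless here only because $s_t$ is first replaced by the deterministic constant $s_{\max}$ before expectation. I would make sure this ordering is respected.
\item \emph{Martingale step in Lemma~\ref{momentum-error}.} This requires $W_s$ to be $\mathcal F_{s-1}$-measurable, which holds for the iteration.
\end{itemize}
Granting the lemmas, no further difficulty arises.
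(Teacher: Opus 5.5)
Your proposal is correct and follows the paper's proof essentially step for step. You uniformize Lemma~\ref{descent-lemma} with $s_{\min},s_{\max}$ before taking expectations, telescope, sum Lemma~\ref{momentum-error} using the two geometric series, and bound the minimum by the average. Your side remark that, under (A3), the clipped factors automatically lie in $[1/(G_{\max}+\delta),\,1/\delta]$ is also correct; the paper does not derive this and simply assumes the uniform bounds.
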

Therefore, for a fixed step size \(\eta\), DGA-Muon achieves a convergence rate of \(\mathcal{O}(1/T + \sigma + \eta)\). The adaptive scaling primarily influences the constant factors within the convergence bound. This result reveals the standard bias-variance tradeoff inherent in constant step-size stochastic optimization: the transient optimization error decays at an \(\mathcal{O}(1/T)\) rate, while the stochastic gradient variance and the finite learning rate introduce an irreducible steady-state error. Consequently, DGA-Muon converges to a neighborhood of a stationary point, characterized by a bounded expected gradient norm whose radius depends on the stochastic variance \(\sigma\) and the step size \(\eta\). Similar to classical stochastic optimization methods, a diminishing step-size schedule is expected to gradually eliminate this steady-state error, although extending the present analysis to this setting is beyond the scope of this work. Furthermore, the current analysis establishes a foundational convergence guarantee for DGA-Muon. A rigorous theoretical characterization of when the geometry-aligned adaptive scaling yields provable improvements over the original Muon update remains an open question and is left for future work.

\section{Experiments}

In this section, we report our experimental results. Note that, due to limited computational resources, some experiments were conducted only on a small scale, especially the ablation studies, and are not included in the current version. We will continue to expand the experimental evaluation as additional results become available.

\subsection{Experimental Setup}

We conduct our experiments using NanoChat~\citep{nanochat}, a highly optimized transformer pretraining framework that can serve as a testbed for evaluating optimization methods. Both the model architecture and training recipe have undergone extensive engineering and empirical tuning in the development of NanoChat, including multiple rounds of automated optimization with Autoresearch~\citep{karpathy2026autoresearch}. As a result, NanoChat provides a strong and carefully tuned baseline for optimizer evaluation. We use the default NanoChat model architecture without modification and refer readers to the repository for full implementation details. For our experiments, we use a model that has $24$ layers and $1.38$B parameters in total, with full attention used across all layers. All experiments are conducted on a single node with eight NVIDIA A100 (80GB) GPUs.

We next describe the optimizer configuration used in NanoChat, which incorporates a range of advanced and carefully tuned optimization techniques. NanoChat employs the NorMuon~\citep{li2026normuon} optimizer. As in conventional Muon-based training, the optimizer is applied only to matrix-valued parameters, including the Q/K/V/O and FFN weight matrices, while the remaining parameters are optimized with AdamW~\citep{loshchilov2019decoupled}. For orthogonalization, NanoChat uses the Polar Express~\citep{amsel2026polarexpress} technique. For the matrix-valued parameters, NanoChat employs the recently proposed cautious weight decay~\citep{chen2026cautious} in place of conventional decoupled weight decay. We also ablate this design choice by replacing cautious weight decay with standard decoupled weight decay and find a clear degradation in training performance.

\subsection{Empirical Verification of NorMuon's Scaling Degeneration}

Recall that in Section~\ref{limitation}, we theoretically showed that NorMuon's adaptive scaling degenerates to an approximately constant factor on square matrices. In this experiment, we empirically verify this theoretical characterization by removing the adaptive scaling on the Q/K/V/O matrices, in which case NorMuon effectively reduces to standard Muon. In the current NanoChat implementation, the Q/K/V/O projection matrices are optimized as full matrices over all attention heads rather than separately on a per-head basis, and hence are square. We also tested per-head optimization of the Q/K/V matrices, but observed no improvement over joint optimization.

\begin{figure}
    \centering
    \includegraphics[width=0.75\linewidth]{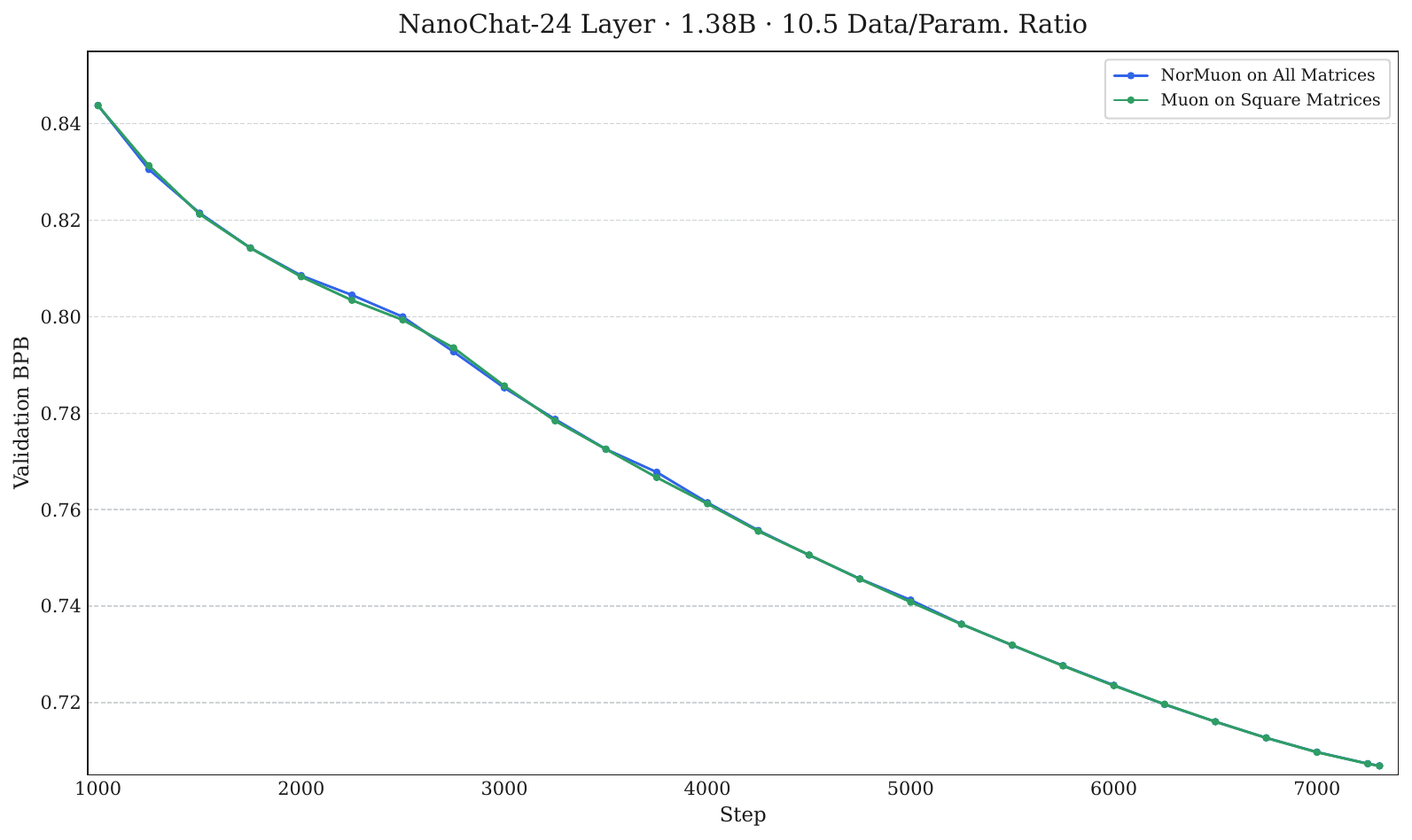}
    \caption{Empirical verification of NorMuon's adaptive scaling degeneration on square matrices in NanoChat. Removing NorMuon's adaptive scaling on the square Q/K/V/O projection matrices, effectively reducing NorMuon to standard Muon, leaves the validation BPB essentially unchanged.}
    \label{fig1}
\end{figure}

Following the NanoChat setup, the total number of training steps is $7,308$, and validation is performed every $250$ steps, with an additional validation run at the end of training. We use validation bits per byte (BPB) as the primary evaluation metric, which measures the average number of bits required to encode each byte of the validation data and is invariant to the tokenizer vocabulary size. Lower BPB indicates better language-modeling performance.

As shown in Figure~\ref{fig1}, removing NorMuon's adaptive scaling on the square Q/K/V/O matrices leaves the validation BPB essentially unchanged, with the curves of the two optimizers nearly overlapping. For the training loss, the two curves are virtually indistinguishable, and we therefore report them in Figure~\ref{fig3} in the Appendix. This result is consistent with our theoretical analysis: for square matrices, the corresponding adaptive scaling factors of NorMuon degenerates to an approximately constant scalar, effectively eliminating its intended adaptive effect. Consequently, removing this scaling has negligible impact on optimization performance, providing empirical support for our theoretical characterization.

\subsection{DGA-Muon Outperforms NorMuon}

We now evaluate the proposed DGA-Muon on NanoChat, using NorMuon as the baseline. We retain the same $24$-layer, $1.38$B-parameter Transformer architecture as in the ablation study above. For this end-to-end evaluation, we increase the data-to-parameter ratio from $10$ to $18$, which is also closer to the Chinchilla-style scaling regime~\citep{hoffmann2022chinchilla}.

Note that DGA-Muon and NorMuon have different optimization characteristics due to their distinct adaptive scaling mechanisms, and therefore require different hyperparameter configurations. We first explain the rationale behind our hyperparameter choices for DGA-Muon. Recall that DGA-Muon constructs its adaptive scaling signal from the raw gradients, whereas NorMuon derives it from the orthogonalized gradients. Orthogonalization provides Muon with favorable isotropic spectral properties, but also removes information encoded in the relative magnitudes of different singular directions. This information becomes increasingly valuable in the later stages of training, as the overall update magnitude decreases and finer-grained differences across singular directions become more important for effective optimization. In contrast, the raw-gradient signal retains this anisotropic information, while exhibiting greater step-to-step variation. These characteristics motivate a more gradual learning-rate schedule for DGA-Muon. Specifically, we extend the warmup phase, increase the warmdown ratio, and reduce the final learning rate fraction, while slightly lowering the learning rate for matrix-valued parameters. The complete hyperparameter configurations are provided in Table~\ref{hyperparameters} in the Appendix.

\begin{figure}
    \centering
    \includegraphics[width=0.75\linewidth]{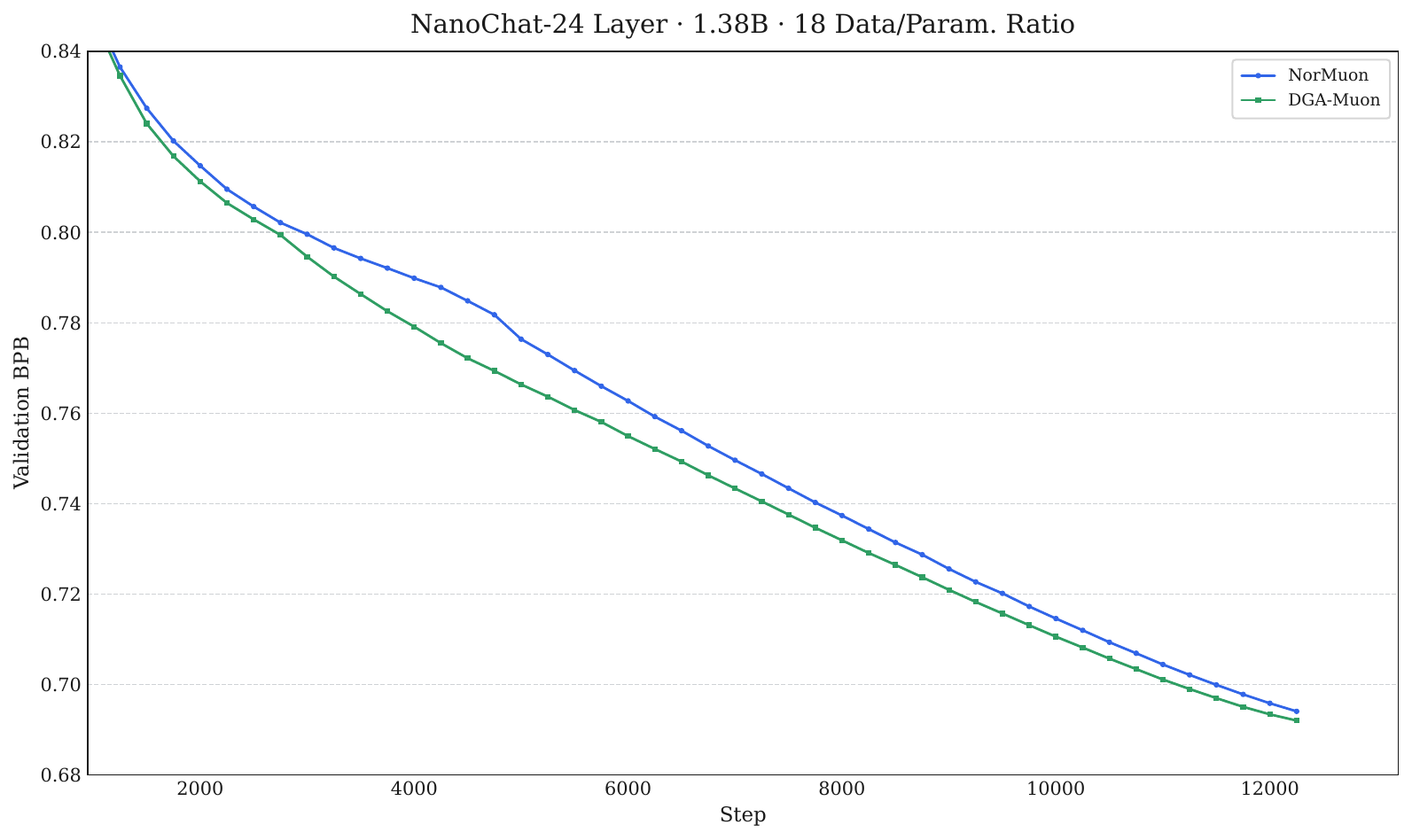}
    \caption{Comparison of DGA-Muon and NorMuon in NanoChat. DGA-Muon consistently outperforms NorMuon, achieving lower validation BPB throughout training.}
    \label{fig2}
\end{figure}

As shown in Figure~\ref{fig2}, DGA-Muon significantly outperforms NorMuon in the end-to-end evaluation, achieving a final validation BPB of $0.6910$, compared with $0.6924$ for NorMuon. The difference in training loss is more pronounced, with DGA-Muon reaching a final loss of $2.2708$ compared with $2.2799$ for NorMuon; the corresponding training-loss curves are provided in Figure~\ref{fig4} in the Appendix. We further evaluate the downstream performance of the two optimizers on the DCLM CORE benchmark, the standard aggregate evaluation metric used in NanoChat. DGA-Muon achieves a CORE score of $0.3089$, while NorMuon obtains $0.2942$, yielding an absolute improvement of $0.0147$ and a relative gain of $5.0\%$. Full downstream results are provided in Table~\ref{eval} in the Appendix. These results validate the effectiveness of our proposed adaptive-scaling design. By constructing adaptive scaling signals directly from raw gradients and aligning the scaling with the underlying geometry of matrices, DGA-Muon achieves more effective adaptation than NorMuon, leading to lower training loss and validation BPB, as well as better downstream performance.

\bibliography{iclr2027_conference}
\bibliographystyle{iclr2027_conference}

\appendix
\section*{Appendix}

\section{Additional experimental Details}
\label{more_exp}

\begin{figure}
    \centering
    \includegraphics[width=0.75\linewidth]{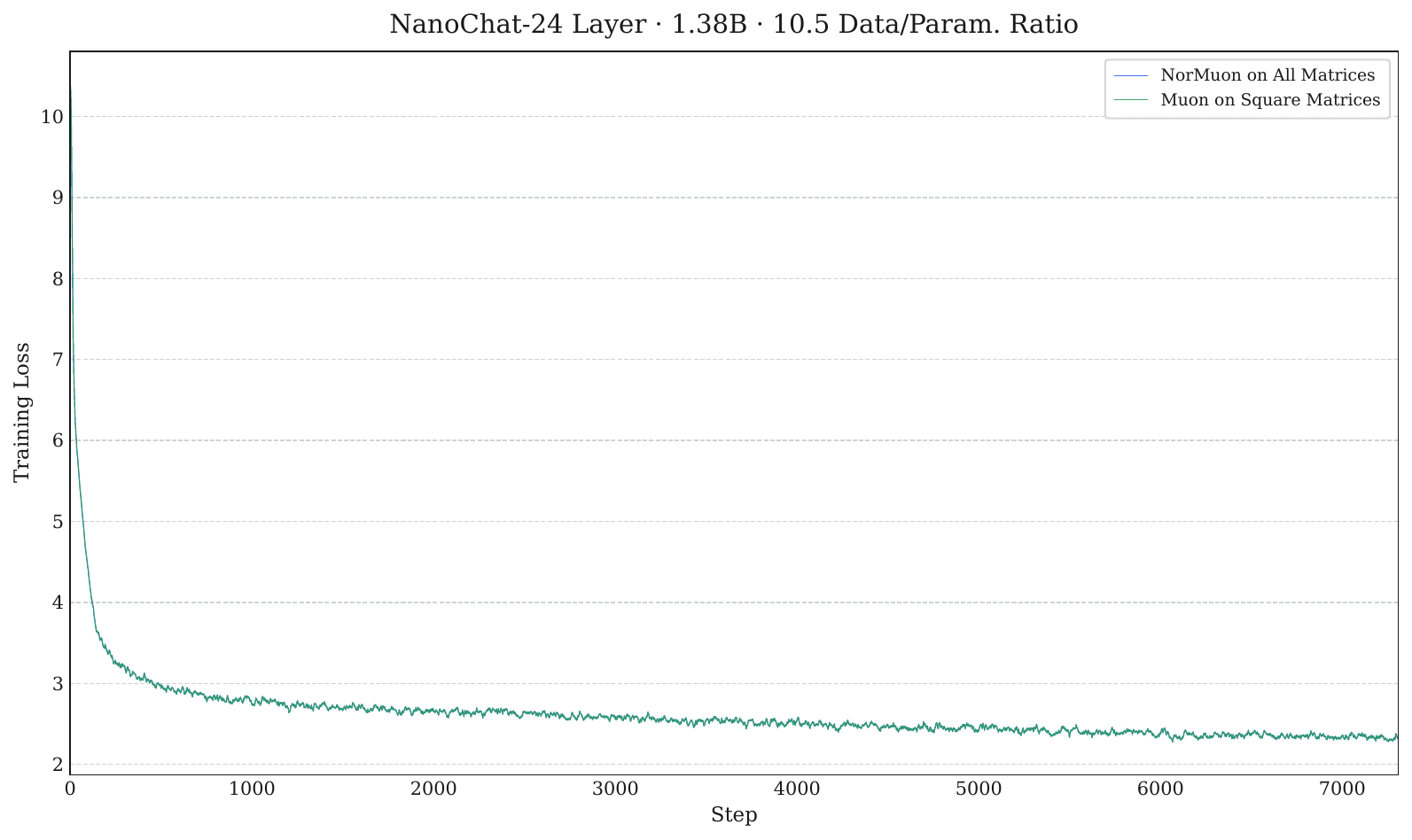}
    \caption{Empirical verification of NorMuon's adaptive scaling degeneration on square matrices in NanoChat. Removing NorMuon's adaptive scaling on the square Q/K/V/O projection matrices, effectively reducing NorMuon to standard Muon, yields virtually indistinguishable training losses.}
    \label{fig3}
\end{figure}

\begin{table}[t]
\centering
\caption{Hyperparameter configurations for the NorMuon baseline and DGA-Muon in NanoChat.}
\vspace{0.5em}
\label{hyperparameters}
\large
\renewcommand{\arraystretch}{1.3}
\setlength{\tabcolsep}{14pt}
\begin{tabular}{|l|c|c|}
\hline
Hyperparameter & NorMuon Baseline & DGA-Muon \\
\hline
Device batch size & 32    & 32 \\
Total batch size  & Auto  & Auto \\
Weight decay      & 0.28  & 0.28 \\
Embedding LR      & 0.3   & 0.3 \\
Unembedding LR    & 0.008 & 0.008 \\
Scalar LR         & 0.5   & 0.5 \\\hline
Matrix LR         & 0.020 & 0.019 \\
Warmup steps      & 40    & 200 \\
Warmdown ratio    & 0.65  & 0.78 \\
Final LR fraction & 0.05  & 0.01 \\\hline
Lower scaling clip & -- & $0.001$ \\
Upper scaling clip & -- & $20$ \\
\hline
\end{tabular}
\end{table}

\begin{figure}
    \centering
    \includegraphics[width=0.75\linewidth]{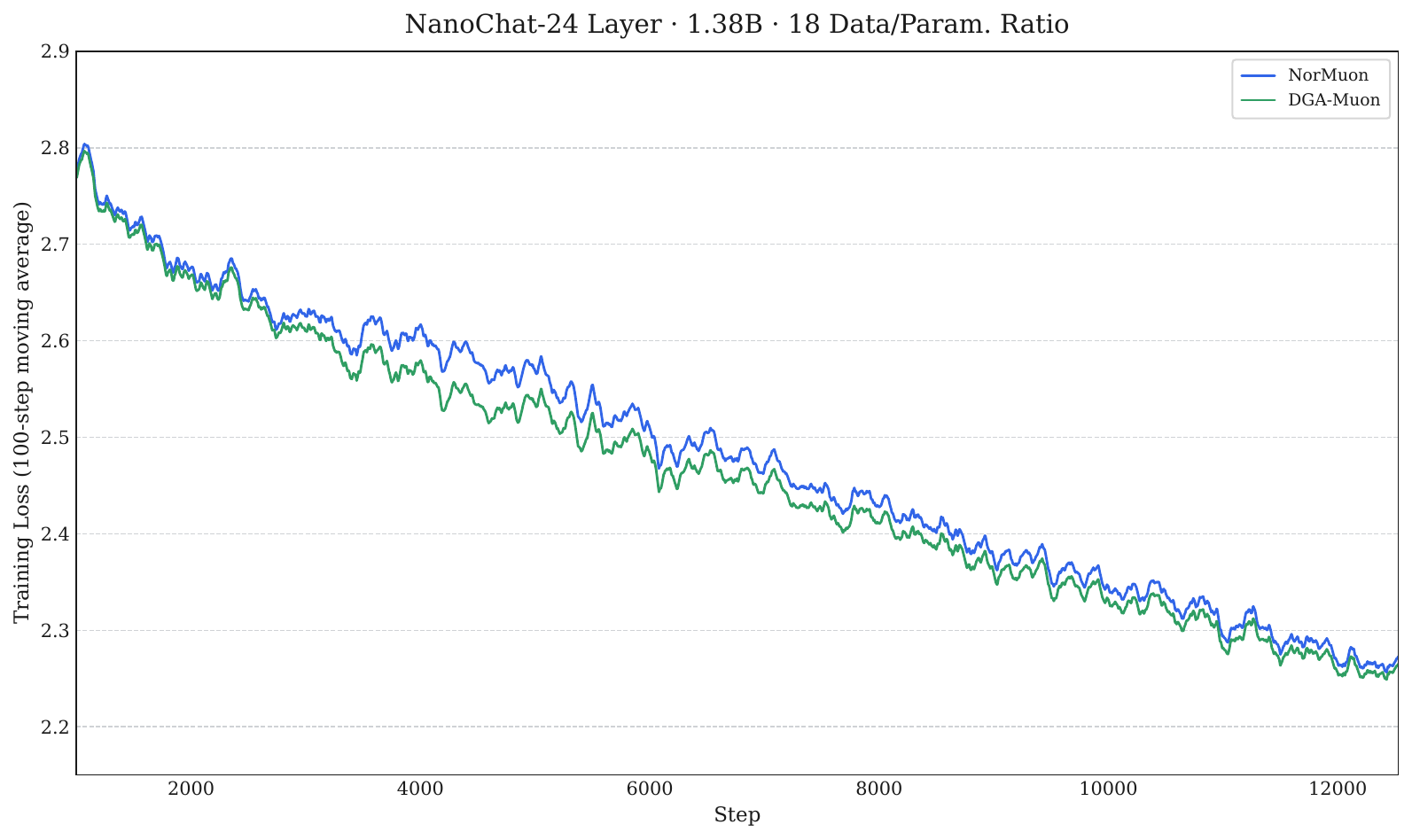}
    \caption{Comparison of DGA-Muon and NorMuon in NanoChat. DGA-Muon consistently outperforms NorMuon, achieving lower training loss throughout training.}
    \label{fig4}
\end{figure}

\begin{table}[t]
\centering
\caption{Downstream task performance of NorMuon and DGA-Muon on the DCLM CORE benchmark. Higher scores indicate better performance.}
\label{eval}
\vspace{0.5em}
\renewcommand{\arraystretch}{1.15}
\setlength{\tabcolsep}{10pt}
\begin{tabular}{|l|c|c|c|}
\hline
Task & NorMuon & DGA-Muon & Diff. \\
\hline
HellaSwag ZeroShot            & 0.6080 & 0.6100 & \textbf{+0.0020} \\
Jeopardy                      & 0.1680 & 0.1840 & \textbf{+0.0160} \\
BIG-Bench QA Wikidata         & 0.5300 & 0.5560 & \textbf{+0.0260} \\
ARC-Easy                      & 0.7460 & 0.7420 & $-$0.0040 \\
ARC-Challenge                 & 0.4760 & 0.4580 & $-$0.0180 \\
COPA                          & 0.6400 & 0.7100 & \textbf{+0.0700} \\
CommonsenseQA                 & 0.2300 & 0.2560 & \textbf{+0.0260} \\
PIQA                          & 0.7800 & 0.7840 & \textbf{+0.0040} \\
OpenBookQA                    & 0.4320 & 0.4080 & $-$0.0240 \\
LAMBADA                       & 0.4880 & 0.4860 & $-$0.0020 \\
HellaSwag                     & 0.6180 & 0.6260 & \textbf{+0.0080} \\
Winograd                      & 0.7143 & 0.6996 & $-$0.0147 \\
Winogrande                    & 0.5760 & 0.5660 & $-$0.0100 \\
BIG-Bench Dyck Languages      & 0.0660 & 0.0940 & \textbf{+0.0280} \\
AGI-Eval LSAT-AR              & 0.2478 & 0.3130 & \textbf{+0.0652} \\
BIG-Bench CS Algorithms       & 0.3940 & 0.4220 & \textbf{+0.0280} \\
BIG-Bench Operators           & 0.2238 & 0.1857 & $-$0.0381 \\
BIG-Bench Repeat Copy Logic   & 0.0312 & 0.0312 & \phantom{+}0.0000 \\
SQuAD                         & 0.5100 & 0.4740 & $-$0.0360 \\
CoQA                          & 0.3220 & 0.3100 & $-$0.0120 \\
BoolQ                         & 0.5660 & 0.6280 & \textbf{+0.0620} \\
BIG-Bench Language Identification & 0.2640 & 0.2500 & $-$0.0140 \\\hline
\textbf{CORE} & \textbf{0.2942} & \textbf{0.3089} & \textbf{+0.0147} \\
\hline
\end{tabular}
\end{table}

\end{document}

%% file: math_commands.tex
\usepackage{amsmath,amsfonts,bm}

\def\eqref#1{equation~\ref{#1}}

\def\1{\bm{1}}

\DeclareMathAlphabet{\mathsfit}{\encodingdefault}{\sfdefault}{m}{sl}
\SetMathAlphabet{\mathsfit}{bold}{\encodingdefault}{\sfdefault}{bx}{n}

